\newcommand\pland{\mathbin{\ensurestackMath{\stackinset{c}{}{c}{-1.6pt}%
		{\cdot}{\wedge}}}}
 \newtheorem{lemma}{Lemma}
\renewcommand{\phi}{\varphi}
\newcommand{\cland}[1]{\mathbin{\wedge_{#1}}}
\newcommand{\set}[1]{\left\{ #1 \right\}}
\newcommand{\Card}{\text{Card}}
\newcommand{\score}{\textit{Score}}
\newcommand{\len}{\textit{len}}
\newcommand{\LTLf}{\textbf{LTL\textsubscript{f}}}
\newcommand{\LTL}{\textbf{LTL}}
\newcommand{\dLTL}{\textbf{dLTL}}
\newcommand{\sym}{\textit{symbols}}
\newcommand{\prop}{\mathcal{P}}
\DeclareMathOperator{\lX}{\mathbf{X}}
\DeclareMathOperator{\lF}{\mathbf{F}}
\DeclareMathOperator{\lG}{\mathbf{G}}
\DeclareMathOperator{\lU}{\mathbf{U}}
\DeclareMathOperator{\limplies}{\rightarrow}
\DeclareMathOperator{\last}{\mathbf{last}}
\DeclareMathOperator{\false}{\mathit{false}}
\DeclareMathOperator{\true}{\mathit{true}}
\DeclareMathOperator{\sample}{\mathcal{S}}
\newcommand{\LastUsedPos}{\textsc{LastPos}}
\newcommand{\Index}{\textsc{Index}}
\newcommand{\tool}{\texttt{SCARLET}}
\newcommand{\flie}{\texttt{FLIE}}
\newcommand{\sys}{\texttt{SYSLITE\textsubscript{L}}}
\newcommand{\syslite}{\texttt{SYSLITE}}
\newtheorem{theorem}{Theorem}
\begin{document}

\title{A Scalable Anytime Algorithm for Learning Fragments of Linear Temporal Logic\footnote{A preliminary version of this article was published in the proceedings of the International Conference on Tools and Algorithms for the Construction and Analysis of Systems (TACAS) in 2022~\cite{Raha.Roy.ea:2022}. This full version includes new proofs of theoretical guarantees, extended experimental section, and improved presentation of the main algorithms.}}

\author[1]{Ritam Raha}

\author[2]{Rajarshi Roy}

\author[3]{Nathana{\"e}l Fijalkow}
\author[4,5]{Daniel Neider}

\affil[1]{Max-Planck Institute for Software Systems, Kaiserslautern, Germany}

\affil[2]{University of Oxford, Oxford, United Kingdom}

\affil[3]{CNRS, LaBRI, Universit{\'e} de Bordeaux, Bordeaux, France}
\affil[4]{TU Dortmund University, Dortmund, Germany}

\affil[5]{Center for Trustworthy Data Science and Security, University Alliance Ruhr, Dortmund, Germany}
\date{}

\maketitle

\begin{abstract}
Linear temporal logic ($\LTLf$) is a specification language for finite sequences (called traces) widely used in program verification, motion planning in robotics, process mining, and many other areas. We consider the problem of learning formulas in fragments of $\LTLf$ without the $\lU$-operator for classifying traces; despite a growing interest of the research community, existing solutions suffer from two limitations: they do not scale beyond small formulas, and they may exhaust computational resources without returning any result.
We introduce a new algorithm addressing both issues: our algorithm is able to construct formulas an order of magnitude larger than previous methods, and it is anytime, meaning that it in most cases successfully outputs a formula, albeit possibly not of minimal size.
We evaluate the performances of our algorithm using an open source implementation against publicly available benchmarks.
\end{abstract}

\section{Introduction}\label{sec:intro}

Linear Temporal Logic ($\LTLf$) is a prominent logic for specifying temporal properties~\cite{Pnueli77}. It has become a de facto standard in many fields such as model checking, program analysis, and motion planning for robotics. Linear Temporal Logic has been studied over finite traces~\cite{Vardi13} and called $\LTLf$. Over the past five to ten years learning temporal logics (of which $\LTLf$ is the core) has become an active research area and identified as an important goal in artificial intelligence: it formalises the difficult task of building explainable models from data.
Indeed, as we will see in the examples below and as argued in the literature~\cite{CamachoMcIlraith2019,RoyFismanNeider20,Raha23},
$\LTLf$ formulas are typically easy to interpret by human users and therefore useful as explanations.
The variable free syntax of $\LTLf$ and its natural inductive semantics make $\LTLf$ a natural target for building classifiers separating positive from negative traces.

The fundamental problem we study here is to build an explainable model in the form of an $\LTLf$ formula
from a set of positive and negative traces. More formally (we refer to the next section for formal definitions), given a set $u_1,\dots,u_n$ of positive traces and a set $v_1,\dots,v_m$ of negative traces, the goal is to construct a formula $\varphi$ of $\LTLf$ which satisfies all $u_i$'s and none of the $v_i$'s.
In that case, we say that $\varphi$ is a separating formula or---using machine learning terminology---a classifier.

To make things concrete let us introduce our running example, a classic motion planning problem in robotics and inspired by \cite{GroverBTK21}. A robot collects wastebin contents in an office-like environment and empties them in a trash container.
Let us assume that there is an office $\textrm{o}$, a hallway $\textrm{h}$, a container $\textrm{c}$ and a wet area $\textrm{w}$.
The following are possible traces obtained in experimentation with the robot (for instance, through simulation):
\[
\begin{array}{lll}
	u_1 & = & \textrm{h} \cdot \textrm{h} \cdot \textrm{h} \cdot \textrm{h} \cdot \textrm{o}
	\cdot \textrm{h} \cdot \textrm{c} \cdot \textrm{h}\\
	v_1 & = & \textrm{h} \cdot \textrm{h} \cdot \textrm{h} \cdot \textrm{h} \cdot \textrm{h} \cdot \textrm{c}
	\cdot \textrm{h} \cdot \textrm{o} \cdot \textrm{h} \cdot \textrm{h} \\
\end{array}
\]
In $\LTLf$ learning we start from these labelled data: given $u_1$ as positive and $v_1$ as negative, what is a possible classifier including $u_1$ but not $v_1$?
Informally, $v_1$ being negative implies that the order is fixed: $\textrm{o}$ must be visited before $\textrm{c}$.
We look for classifiers in the form of separating formulas, for instance
\[
\lF(\textrm{o} \wedge \lF \lX \textrm{c}),
\] 
where the $\lF$-operator stands for ``finally'' and $\lX$ for ``next''.
Note that this formula requires to visit the office first and only then visit the container.

Assume now that two more negative traces were added:
\[
\begin{array}{lll}
	v_2 & = & \textrm{h} \cdot \textrm{h} \cdot \textrm{h} \cdot \textrm{h} \cdot \textrm{h} \cdot \textrm{o} \cdot \textrm{w} \cdot \textrm{c} \cdot \textrm{h} \cdot \textrm{h} \cdot \textrm{h}\\
	v_3 & = & \textrm{h} \cdot \textrm{h} \cdot \textrm{h} \cdot \textrm{h} \cdot \textrm{h} \cdot \textrm{w} \cdot \textrm{o} \cdot \textrm{w} \cdot \textrm{c} \cdot \textrm{w} \cdot \textrm{w}
\end{array}
\]
Then the previous separating formula is no longer correct, and a possible separating formula is
\[
\lF(\textrm{o} \wedge \lF \lX \textrm{c}) \wedge \lG (\neg\textrm{w}),
\]
which additionally requires the robot to never visit the wet area.
Here the $\lG$-operator stands for ``globally''.

Let us emphasise at this point that for the sake of presentation, we consider only exact classifiers:
a separating formula must satisfy all positive traces and none of the negative traces.
However, as we will show our algorithm naturally extends to the noisy data setting where the goal is to construct an approximate classifier,
replacing `all' and `none' by `almost all' and `almost none'.

\paragraph*{Motivations and applications.}
The study of $\LTLf$ learning spans various research fields, each with unique applications, methodologies, and perspectives. This paragraph offers a brief overview of the motivations and uses of $\LTLf$ learning across these different communities.
\begin{itemize}[leftmargin=*, itemsep=1ex]	

	\item \textbf{Software Engineering, Programming Languages, and Formal Methods.}
Learning Linear Temporal Logic over finite traces ($\LTLf$ learning) is a specific type of specification mining, an active research area focused on automatically extracting formal specifications from code.  It is crucial to differentiate between dynamic and static settings in this context. This paper focuses on the dynamic setting, where we analyze program executions (traces) to infer properties of the code.
The term ``specification mining'' was introduced by Ammons et al. \cite{Ammons.Bodk.ea:2002} in a foundational paper that used finite state machines to represent both temporal and data dependencies. Zeller \cite{Zeller:2010} provided a roadmap for specification mining, emphasizing its potential benefits for software engineering. Later, Rozier \cite{Rozier:2016} established a research program centered around ``$\LTLf$ Genesis,'' which motivated and introduced the $\LTLf$ learning problem as we understand it today.
Texada, the first tool to support all $\LTLf$ formulas, was developed in 2015 by Lemieux et al. \cite{LPB15,Lemieux.Beschastnikh:2015}. Subsequent research has focused on scaling $\LTLf$ learning to handle industrial-sized datasets, leading to various approaches. 
Recently, Valizadeh et al. \cite{Valizadeh.Fijalkow.ea:2024} proposed a GPU-accelerated algorithm for $\LTLf$ learning.
Applications of specification mining in software engineering include detecting malicious behaviors \cite{Christodorescu.Jha.ea:2007} and violations \cite{Li.Zhou:2005}. It is already widely used: for example, the ARSENAL and ARSENAL2 projects \cite{Ghosh.Elenius.ea:2016} successfully constructed $\LTLf$ formulas from traces derived from English requirements, and the FRET project generates $\LTLf$ from trace descriptions \cite{Giannakopoulou.Pressburger.ea:2020}.
For a comprehensive overview of specification mining for software and its connection to data mining, refer to the textbook by Lo et al. \cite{Lo.Khoo.ea:2017} and the PhD thesis of Li \cite{Li:2013}.

\item \textbf{Control of Cyber-Physical Systems and Robotics.}
Another significant area where $\LTLf$ learning plays a vital role is in understanding the behavior of trajectories in models and systems. However, in this domain, with its emphasis on quantitative analysis, Signal Temporal Logic (\textbf{STL}) often takes precedence.  Unlike $\LTLf$, \textbf{STL} incorporates numerical constants, making it better suited to capturing real-valued and time-varying behaviors.
Specifically, temporal logics like \textbf{STL} enable researchers to analyze system robustness \cite{Bartocci.Bortolussi.ea:2015} and detect anomalies \cite{Jones.Kong.ea:2014,Kong.Jones.ea:2017}.
A considerable body of work exists on learning temporal logics in control and robotics, broadly categorized into two approaches:
\begin{itemize}[leftmargin=1.5em, noitemsep]
	\item \emph{Parameter fitting:} These techniques focus on optimizing the parameters of a predefined \textbf{STL} formula \cite{Asarin.Donze.ea:2012,Yang.Hoxha.ea:2012,Jin.Donze.ea:2015}.
	\item \emph{Formula and parameter search:} These methods aim to learn both the structure of the formula and its parameters \cite{Jin.Donze.ea:2015,BoVaPeBe-HSCC-2016}.
\end{itemize}
This research has led to numerous case studies, including:
\begin{itemize}[leftmargin=1.5em, noitemsep]
	\item \emph{Automotive applications:} Analyzing automobile transmission controllers and engine airpath control \cite{Yang.Hoxha.ea:2012}.
	\item \emph{Healthcare:} Modeling assisted ventilation for intensive care patients \cite{Bufo.Bartocci.ea:2014}.
	\item \emph{Biological systems:} Studying the dynamics of biological circadian oscillators and identifying different types of cardiac malfunction from electrocardiogram data \cite{Bartocci.Bortolussi.ea:2014}.

	\item \emph{Maritime safety:} Detecting anomalies in maritime environments \cite{BoVaPeBe-HSCC-2016}.
	\item \emph{Robotics:} Demonstrating learned behaviors in robots \cite{ChouOB20}.
	\item \emph{Aviation safety:} Detecting attention loss in pilots \cite{Lyu.Li.ea:2024}.
\end{itemize}

\item \textbf{Artificial Intelligence.}
The field of AI has not only broadened the applications of $\LTLf$ learning but also significantly enriched its techniques.  A central idea in AI's approach to $\LTLf$ learning is that $\LTLf$ provides a natural and interpretable framework \cite{CamachoMcIlraith2019} for defining objectives \cite{Icarte.Klassen.ea:2018} in various machine learning scenarios. For example, in reinforcement learning, $\LTLf$ formulas have been used to guide \cite{Camacho.Chen.ea:2017,Brafman.Giacomo.ea:2018} or to constraint~\cite{Icarte.Klassen.ea:2018*1,Camacho.Icarte.ea:2019} the learning process of policies.  Another recent application leverages $\LTLf$ to accelerate synthesis by decoupling data and control \cite{Murphy.Holzer.ea:2024}.
\end{itemize}
%\vskip1em
\paragraph*{SOTA on $\LTLf$  Learning.}
Different approaches have been explored. 
A first family of approaches leverages logical reasoning, such as SAT solving and decision trees~\cite{NeiderGavran18}, constraint programming~\cite{ArifLERCT20}, evolutionary algorithms~\cite{Bufo.Bartocci.ea:2014}, inductive logic programming~\cite{Ielo.Law.ea:2023}, alternating automata~\cite{CamachoMcIlraith2019}, and automata learning~\cite{Jeppu.Melham.ea:2020, Jeppu.Melham.ea:2022, Jeppu.Melham.ea:2023}.
A GPU-accelerated algorithm was recently published~\cite{Valizadeh.Fijalkow.ea:2024}, yielding an improved state of the art in a different category.
The second family is based on machine learning algorithms. Some algorithms use Bayesian inference, for positive only traces~\cite{Shah.Kamath.ea:2018} and for both positive and negative traces~\cite{ijcai2019-0776}. Many others employ deep learning: for positive only traces~\cite{Peng.Liang.ea:2023}, in combination with optimization~\cite{Gupta.Komp.ea:2024}, using Graph Neural Networks~\cite{Luo.Liang.ea:2022}, or even Transformers and Mamba architectures~\cite{Isik.Gol.ea:2024}.

Existing methods do not scale beyond formulas of small size, making them hard to deploy for industrial cases. A second serious limitation is that they often exhaust computational resources without returning any result. Indeed theoretical studies~\cite{FijalkowLagarde21,Mascle.Fijalkow.ea:2023} have shown that constructing the minimal $\LTLf$ formula is NP-hard already for very small fragments of $\LTLf$, explaining the difficulties found in practice.

\paragraph*{Our approach.}  To address both issues, we turn to \textit{approximation} and \textit{anytime} algorithms.
Here \textit{approximation} means that the algorithm does not ensure minimality of the constructed formula: 
it does ensure that the output formula separates positive from negative traces, but it may not be the smallest one.
On the other hand, an algorithm solving an optimisation problem is called \textit{anytime} if it finds better and better solutions the longer it keeps running. %can be interrupted anytime during its computation and yield some good albeit non-optimal solution. 
In other words, anytime algorithms work by refining solutions.
As we will see in the experiments, this implies that even if our algorithm timeouts it may yield some good albeit non-optimal formula.
Our algorithm targets a strict fragment of $\LTLf$, which does not contain the Until operator (nor its dual Release operator).
It combines two ingredients:
\begin{itemize}
	\item \textit{Searching for directed formulas}: we define a space efficient dynamic programming algorithm for enumerating formulas from a fragment of $\LTLf$ that we call Directed $\LTLf$ . 
	\item \textit{Combining directed formulas}: we construct two algorithms for combining formulas using Boolean operators. The first is an off-the-shelf \textit{decision tree algorithm}, and the second is a new greedy algorithm called \textit{Boolean set cover}. 
\end{itemize}

The two ingredients yield two subprocedures: the first one finds directed formulas of increasing size,
which are then fed to the second procedure in charge of combining them into a separating formula.
This yields an anytime algorithm as both subprocedures can output separating formulas even with a low computational budget
and refine them over time.

\vskip1em
Let us illustrate the two subprocedures in our running example.
The first procedure enumerates so-called \textit{directed formulas} in increasing size;
we refer to the corresponding section for a formal definition.
The directed formulas $\lF(\textrm{o} \wedge \lF \lX \textrm{c})$ and $\lG (\neg \textrm{w})$ have small size
hence will be generated early on.
The second procedure constructs formulas as Boolean combinations of directed formulas.
Without getting into the details of the algorithms, let us note that 
both $\lF(\textrm{o} \wedge \lF \lX \textrm{c})$ and $\lG (\neg \textrm{w})$ satisfy $u_1$.
The first does not satisfy $v_1$ and the second does not satisfy $v_2$ and $v_3$.
Hence, their conjunction $\lF(\textrm{o} \wedge \lF \lX \textrm{c}) \wedge \lG  (\neg \textrm{w})$ is separating, 
meaning it satisfies $u_1$ but none of $v_1,v_2,v_3$.

\paragraph*{Outline.}
The mandatory definitions and the problem statement we deal with are described in Section~\ref{sec:preliminaries}.
Section~\ref{sec:highlevel} shows a high-level overview of our main idea in the algorithm.
The next two sections, Section~\ref{sec:directed_formulas} and Section~\ref{sec:combining} describe the two phases of our algorithm in detail, in one section each. 
We discuss the theoretical guarantees and extensions of our algorithm in Section~\ref{sec:algorithm}.
We conclude with an empirical evaluation in Section~\ref{sec:experiments}.

\section{Preliminaries}\label{sec:preliminaries}

\paragraph*{Traces.} Let $\prop$ be a finite set of atomic propositions. 
An \textit{alphabet} is a finite non-empty set $\Sigma= 2^ \prop$, whose elements are called \sym.
A finite \emph{trace} over $\Sigma$ is a finite sequence 
$t = a_1 a_2 \ldots a_n$ such that for every $1 \leq i \leq n$, $a_i \in \Sigma$. 
We say that $t$ has length $n$ and write $\len(t) = n$.
For example, let $\prop = \{p,q\}$, in the trace 
$t = \{p,q\} \cdot \{p\} \cdot \{q\}$ both $p$ and $q$ hold at the first position, only $p$ holds in the second position, and $q$ in the third position. 
Note that, throughout the paper, we only consider finite traces.

A trace is a \textit{word} if exactly one atomic proposition holds at each position: we used words in the introduction example for simplicity, writing $h \cdot o \cdot c$ instead of $\set{h} \cdot \set{o} \cdot \set{c}$.

Given a trace $t = a_1 a_2 \ldots a_n$ and $1 \le i \leq j \leq n$, let $t[i,j] = a_i \ldots a_j$ be the \textit{infix} of $t$ from position $i$ up to and including position $j$. 
Moreover, $t[i] = a_i$ is the symbol at the $i^{th}$ position.

\paragraph*{Linear Temporal Logic.} In this paper, we are interested in $\LTLf$, defined by the following grammar

\[
\varphi:= p \in \prop \mid \neg p \mid \varphi \lor \psi \mid \varphi \land \psi \mid \lX \varphi \mid \lF \varphi \mid \lG \varphi
\mid \varphi \lU \psi
\]

We use the standard formulas:
$\true = p \lor \neg p$,  $\false = p \land \neg p$ and $\last = \neg \lX \true$, which denotes the last position of the trace.
As a shorthand, we use $\lX^n \varphi$ for $\underbrace{\lX \dots \lX}_{n \text{ times}} \varphi$.

The \emph{size $|\varphi|$ of a formula $\varphi$} is the size of its underlying syntax tree.

Formulas in \LTLf{} are evaluated over finite traces. 
To define the semantics of \LTLf{} we introduce the notation $t, i \models \varphi$,
which reads `the \LTLf{} formula $\varphi$ holds over trace $t$ from position $i$'.
We say that $t$ satisfies $\varphi$ and we write $t \models \varphi$ when $t,1 \models \varphi$.
The definition of $\models$ is inductive on the formula $\varphi$:

\begin{itemize}
	\item[--]  $t,i \models p \in \prop$ if $p \in t[i]$.
	\item[--] $t,i \models \lX \varphi$ if $i < \len(t)$ and $t, i+1 \models \varphi$. It is called the ne$\lX$t operator.
	\item[--] $t,i \models \lF \varphi$ if $t,i'\models \varphi$ for some $i'\in [i,\len(t)]$. It is called the eventually operator (F comes from $\lF$inally).
	\item[--] $t,i \models \lG \varphi$ if $t,i'\models \varphi$ for all $i' \in [i,\len(t)]$. It is called the $\lG$lobally operator.
	\item[--] $t,i \models \varphi \lU \psi$ if $t,j \models \psi$ for some $i \leq j \leq \len(t)$  and $t,i' \models \varphi$ for all $i \leq i' < j$. It is called the $\lU$ntil operator.
\end{itemize}

In this paper we do not consider the $\lU$ operator. Thus, for readability, we use $\LTLf$ to refer to the fragment $\LTLf \setminus \lU$ in the rest of the paper.

%\paragraph*{The $\LTLf$ Learning Problem.} The $\LTLf$ exact learning problem studied in this paper is the following:
%given a set $P$ of positive traces and a set $N$ of negative traces, construct a minimal \LTLf{} separating formula $\varphi$,
%meaning such that $t \models \varphi$ for all $t \in P$ and $t \not \models \varphi$ for all $t \in N$.
%
%There are two relevant parameters for a sample: its \textit{size}, which is the number of traces,
%and its \textit{length}, which is the maximum length of all traces.
%
%The problem is naturally extended to the $\LTLf$ noisy learning problem where the goal is to construct an $\varepsilon$-separating formula,
%meaning such that $\varphi$ satisfies all but an $\varepsilon$ proportion of the traces in $P$ and 
%none but an $\varepsilon$ proportion of the traces in $N$.
%For the sake of simplicity we present an algorithm for solving the $\LTLf$ exact learning problem,
%and later sketch how to extend it to the noisy setting.

\vskip1em
The $\LTLf$ exact learning problem we study here is in the \emph{passive learning} setting: models are learnt based on a given data set. First, we define the input of our problem formulation, which we call a \emph{Sample}. 

\paragraph*{Sample.} A \emph{sample} consists of a set of labelled (finite) traces. Formally, we rely on a sample $\sample = (P, N)$ consisting of $P$, a set of positive traces, and $N$, a set of negative traces. We say a sample is \emph{informative} if $P \cap N = \emptyset$. We say an $\LTLf$ formula $\varphi$ is separating for $\sample$ if it satisfies all the positive traces in $P$ and does not satisfy any negative traces in $N$.

There are two relevant parameters for a sample: its \textit{size} $|\sample| = |P| + |N|$, which is the number of traces,
and its \textit{length} $\mathit{len}(\sample) = \max\{\len(t) \mid t \in P \cup N\}$, which is the maximum length of all traces.

\paragraph*{$\LTLf$ exact learning.} Given a sample $\sample = P \cup N$, construct a minimal $\LTLf$ formula $\varphi$ that is separating for $\sample$ i.e., $t \models \varphi$ for all $t \in P$ and $t \not \models \varphi$ for all $t \in N$.

\vskip1em
In practical scenarios, noise in data is omnipresent, i.e., they are misclassified as positive or negative behaviours of the underlying system. Exact learning algorithms on these noisy samples would often result in overfitting on the particular input data, and the extracted $\LTLf$ specifications might not be suitable to interpret the system properly. Hence, the problem is naturally extended to the $\LTLf$ noisy learning problem where the goal is to infer a separating $\LTLf$ formula with \emph{low loss}, where loss indicates the fraction of the sample misclassified by the formula. Given a sample $\sample = (P,N)$ and a formula $\varphi$, let us define the loss function $\mathit{loss}(\sample, \varphi) =  \frac{\mathrm{mc}(\sample, \varphi)}{|\sample|}$ where, $\mathrm{mc}(\sample, \varphi) = |\{t \not \models \varphi \mid t \in P\}| + |\{t  \models \varphi \mid t \in N\}|$ denotes the number of traces that $\varphi$ misclassifies. Given a threshold $\varepsilon$, an $\LTLf$ formula $\varphi$ is a  $\varepsilon$-separating formula if $\mathit{loss}(\sample, \varphi) \leq \epsilon$. Based on this definition of the loss function, we define our $\LTLf$ noisy learning problem as follows:

\paragraph*{$\LTLf$ noisy learning.} Given a sample $\sample$ and a threshold $\epsilon$, construct a minimal $\LTLf$ separating formula $\varphi$ for $\sample$ such that $\mathit{loss}(\sample, \varphi) \leq \epsilon$.

\vskip1em
Next, we present an algorithm for solving the $\LTLf$ exact learning problem; we later show how to adapt it to the noisy setting.

\section{High-level view of the approach}\label{sec:highlevel}
Let us start with a naive algorithm for the $\LTLf$ Learning problem. 
We can search through all $\LTLf$ formulas in some order and check whether they are separating for our sample or not.
Checking whether an $\LTLf$ formula is separating can be done using standard methods (\textit{e.g.} using bit vector operations~\cite{BaresiKR15}). 
However, the major drawback of this idea is that we have to search through all $\LTLf$ formulas, which is hard as the number of $\LTLf$ formulas grows very quickly\footnote{The number of $\LTLf$ formulas of size $k$ is asymptotically equivalent to $\frac{\sqrt{14}\cdot 7^k}{2 \sqrt{\pi k^3}}$~\cite{Flajolet08analyticcombinatorics}}.

\paragraph*{Directed \LTLf.}
The first insight of our approach is the definition of a fragment of \LTLf{} that we call \emph{directed \LTLf{}}: instead of entire $\LTLf$, our algorithm performs an iterative search through directed $\LTLf$ formulas.
As we will demonstrate in Section~\ref{sec:directed_formulas}, the definition of directed $\LTLf$ enables a very efficient search procedure.

Let us first define directed \LTLf. A \textit{partial symbol} is a conjunction of positive or negative distinct atomic propositions (true and false are not partial symbols).
We write $s = p_0 \wedge p_2 \wedge \neg p_1$ for the partial symbol specifying that $p_0$ and $p_2$ hold and $p_1$ does not.
The definition of a symbol satisfying a partial symbol is natural: for instance the symbol $\set{p_0,p_2,p_4}$ satisfies $s$.
The \emph{width} of a partial symbol is the number of atomic propositions it uses.

Directed $\LTLf$  is defined by the following grammar:
\[
\varphi = 
\lX^n s 
\quad \mid \quad 
\lF \lX^n s 
\quad \mid \quad 
\lX^n (s \wedge \varphi)
\quad \mid \quad 
\lF \lX^n (s \wedge \varphi),
\]
where $s$ is a partial symbol and $n \in \mathbb{N}$.
As an example, the directed formula
\[
\lF ((p \wedge q) \wedge \lF \lX^2 \neg p)
\]
reads: there exists a position satisfying $p \wedge q$, and at least two positions later, there exists a position satisfying $\neg p$.
The intuition behind the term ``directed'' is that a directed formula fixes the order in which the partial symbols occur.
A non-directed formula is $\lF p \wedge \lF q$: it does not specify whether $p$ appears before or after $q$.
Note that Directed $\LTLf$  only uses the $\lX$ and $\lF$ operators as well as conjunctions and atomic propositions.

%We expand upon this in Section~\ref{sec:directed_formulas}.
%In that section, we also describe how we can iteratively generate these Directed $\LTLf$  formulas in a particular ``size order'' (not the usual size of an $\LTLf$ formula) and evaluate these formulas over the traces in the sample efficiently using dynamic programming techniques.

\paragraph*{Boolean combinations.}
To include more formulas in our search space, we generate and search through Boolean combinations of Directed \textbf{LTL} formulas (Line~\ref{line:bool-comb}), which we describe in detail in Section~\ref{sec:combining}.

\paragraph*{Theoretical properties.}
The following result shows the relevance of our approach using directed $\LTLf$ and Boolean combinations.

\vskip1em
\begin{theorem}\label{thm:fragment-equivalence}
	Every formula of $\LTLf\,(\lF,\lX,\wedge,\vee)$ is equivalent to a Boolean combination of directed formulas.
	Equivalently, every formula of $\LTLf\,(\lG,\lX,\wedge,\vee)$ is equivalent to a Boolean combination of dual directed formulas.
\end{theorem}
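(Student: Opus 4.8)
The plan is to prove the first statement (the second follows by the duality dictionary $\neg \lX \varphi = \last \vee \lX \neg \varphi$, $\neg \lF \varphi = \lG \neg \varphi$, $\neg(\varphi_1 \wedge \varphi_2) = \neg \varphi_1 \vee \neg \varphi_2$ already recorded in Section~\ref{sec:directed_formulas}: negating a formula of $\LTL(\lF,\lX,\wedge,\vee)$ yields a formula of $\LTL(\lG,\lX,\vee,\wedge)$, and a Boolean combination of directed formulas negates to a Boolean combination of dual directed formulas). So I focus on: every $\varphi \in \LTL(\lF,\lX,\wedge,\vee)$ is equivalent to a Boolean combination of directed formulas. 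I proceed by structural induction on $\varphi$, but the induction hypothesis needs to be strengthened: I claim every such $\varphi$ is equivalent to a formula in \emph{disjunctive normal form} where each disjunct is a conjunction of directed formulas — equivalently, a positive Boolean combination of directed formulas (no negations needed at all, since $\LTL(\lF,\lX,\wedge,\vee)$ has no negation except on atoms, and $\neg p$ is itself a partial symbol hence a directed formula $\lX^0 (\neg p)$).

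The base cases are immediate: $p$ and $\neg p$ are partial symbols $s$, so $\lX^0 s$ is a directed formula. For the inductive step, $\varphi = \psi_1 \vee \psi_2$ is trivial — concatenate the DNF representations. For $\varphi = \psi_1 \wedge \psi_2$, take DNF representations $\bigvee_i C_i$ and $\bigvee_j C'_j$; then $\varphi \equiv \bigvee_{i,j} (C_i \wedge C'_j)$, and each $C_i \wedge C'_j$ is again a conjunction of directed formulas. The two genuinely substantive cases are $\lX$ and $\lF$. For $\varphi = \lX \psi$ with $\psi \equiv \bigvee_i C_i$: I use that $\lX$ distributes over $\vee$ (since $\lX$ is deterministic: $t,i \models \lX(\alpha \vee \beta)$ iff $i<|t|$ and $t,i+1\models\alpha\vee\beta$) — but care is needed because $\lX$ at the end of the trace is false, so $\lX$ does \emph{not} distribute over $\wedge$ in the naive way; however $\lX(\alpha \wedge \beta) \equiv \lX\alpha \wedge \lX\beta$ actually does hold, the issue is only $\lX$ vs negation, which does not arise here. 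So $\lX\varphi \equiv \bigvee_i \lX C_i$ and $\lX C_i = \lX(\delta_1 \wedge \dots \wedge \delta_m)$; I push $\lX$ inside to get $\lX\delta_1 \wedge \dots \wedge \lX\delta_m$, and then observe that $\lX$ applied to a directed formula is again a directed formula: $\lX (\lX^n s) = \lX^{n+1}s$, $\lX(\lX^n(s\wedge\chi)) = \lX^{n+1}(s\wedge\chi)$, $\lX(\lF\lX^n s)$ — here is a subtlety, $\lX \lF \chi$ is \emph{not} of the directed shape $\lF\lX^n(\dots)$ directly, but $\lX\lF\chi \equiv \lX^1 \lF \lX^0 \chi$; I need to check whether the grammar allows $\lX^n$ \emph{before} an $\lF$. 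Looking at the grammar, it does not: directed formulas start with $\lX^n s$, $\lF\lX^n s$, $\lX^n(s\wedge\varphi)$, or $\lF\lX^n(s\wedge\varphi)$ — there is no $\lX^n \lF$ prefix. But semantically $\lX \lF \chi \equiv \lF \lX \chi$ \emph{is false} in general on finite traces (consider a one-letter trace). The correct identity is $t,i \models \lX\lF\chi$ iff $i < |t|$ and $\exists i' \in [i+1,|t|]: t,i'\models\chi$; whereas $\lF\lX\chi$ at position $i$ means $\exists i'\in[i,|t|]: i'<|t|$ and $t,i'+1\models\chi$, i.e. $\exists i''\in[i+1,|t|]: t,i''\models\chi$ — these agree! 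So $\lX\lF\chi \equiv \lF\lX\chi$ does hold on finite traces (both require a strictly-later witness and a non-final current-or-later position appropriately; the equivalence holds). Thus $\lX$ of any directed formula is directed, modulo massaging.

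The hardest case is $\varphi = \lF\psi$ with $\psi \equiv \bigvee_i C_i$. First, $\lF$ distributes over $\vee$: $\lF(\alpha\vee\beta) \equiv \lF\alpha \vee \lF\beta$. So $\lF\varphi \equiv \bigvee_i \lF C_i$ and it suffices to handle $\lF(\delta_1 \wedge \dots \wedge \delta_m)$ where each $\delta_k$ is directed. This is where the real work is: $\lF$ does \emph{not} distribute over $\wedge$, and $\lF$ of a conjunction of directed formulas need not itself be directed — it is here that we genuinely need Boolean combinations. The key idea I would use is a \emph{merge/shuffle} argument: $\lF(\delta_1 \wedge \dots \wedge \delta_m)$ asserts the existence of a single position $i'$ from which all the $\delta_k$ hold simultaneously. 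Each directed $\delta_k$, read from position $i'$, describes a sequence of partial-symbol constraints at a strictly/weakly increasing sequence of positions starting at $i'$. A conjunction of such ordered constraint-sequences, all anchored at the same starting position $i'$, can be rewritten as a disjunction over all the ways to interleave (merge) the constraint sequences into a single combined ordered constraint sequence — at each merge point, either two constraints land at the same position (take the pointwise-and of the partial symbols, exactly the $\pland$ operator from Section~\ref{sec:directed_formulas}) or at known/unknown relative offsets ($\lX^k$ or $\lF\lX^k$, exactly $\cland{=k}$ and $\cland{\ge k}$). Because each $\delta_k$ has finitely many symbols and there are finitely many merge patterns, this yields a finite disjunction; prefixing by a single $\lF$ (the first anchor) keeps each disjunct directed. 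I would formalize this by induction on $m$, or more cleanly on the total number of partial symbols in $\delta_1 \wedge \dots \wedge \delta_m$: peel off the first partial symbol across all conjuncts, case on which conjunct's first symbol occurs first (a disjunction), and recurse. The main obstacle is making this merge argument fully rigorous while tracking the $\lX^n/\lF\lX^n$ offsets correctly — in particular handling ties (same position, use $\pland$) versus the $\lX$-vs-$\lF$ distinction at the very front (the outer $\lF$ must be consumed by exactly one conjunct's leading $\lF$, or shared), and verifying that strict vs. non-strict inequalities compose correctly. I expect the bookkeeping of offsets in the interleaving, rather than any conceptual difficulty, to be the bulk of the proof.
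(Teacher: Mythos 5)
Your proposal is correct and follows the same skeleton as the paper's proof: structural induction reducing every formula to a positive DNF over directed formulas, with $\lX$ and $\lF$ distributed over disjunction, and with the crux being $\lF$ applied to a conjunction of directed formulas, resolved by a disjunction over the possible orderings of the constraints. Where you differ is in how that crux is discharged. You propose an explicit $m$-way merge/shuffle of the constraint sequences and correctly identify the offset bookkeeping as the main obstacle. The paper instead isolates a binary lemma --- the conjunction $\Delta_1 \wedge \Delta_2$ of two directed formulas is a disjunction of directed formulas --- proved by structural induction on the pair using local rewriting identities such as $\lF\delta_1 \wedge \lF\delta_2 \equiv \lF(\delta_1 \wedge \lF\delta_2) \vee \lF(\delta_2 \wedge \lF\delta_1)$ and $\lX\delta_1 \wedge \lF\delta_2 \equiv (\lX\delta_1 \wedge \delta_2) \vee \lX(\delta_1 \wedge \lF\delta_2)$; iterating the binary lemma handles $m$ conjuncts, and the outer $\lF$ then absorbs into the leading $\lF$ or $\lX^n$ of each resulting disjunct. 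This binary formulation dissolves exactly the bookkeeping you flag: each identity moves one step at a time, so ties, strict-versus-nonstrict offsets, and the placement of the outer $\lF$ are handled locally rather than by a global interleaving argument. Your route would also go through, but at the cost of a heavier combinatorial verification. One point in your favour: you check $\lX\lF\chi \equiv \lF\lX\chi$ on finite traces, which is genuinely needed for $\lX$ applied to a directed formula of the form $\lF\lX^n(\cdots)$ to be (equivalent to) a directed formula; the paper asserts this closure under $\lX$ without comment.
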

\vskip1em

To get an intuition, let us consider the formula $\lF p \wedge \lF q$, which is not directed,
but equivalent to $\lF (p \wedge \lF q) \vee \lF (q \wedge \lF p)$.
In the second formulation, there is a disjunction over the possible orderings of $p$ and $q$.
The formal proof generalises this rewriting idea. For readability, in this section, we refer to Directed \textbf{LTL} as $\dLTL$ and the Boolean combination of Directed \textbf{LTL} as $\dLTL_{\wedge,\vee}$. 

We first prove two lemmas necessary for the proof of the above theorem.

\vskip1em
\begin{lemma}\label{lem:X}
	Let $\varphi$ be a $\dLTL$ formula. Then $\lX \varphi$ is a $\dLTL$ formula.
\end{lemma}
\vskip1em

\begin{proof}[Proof of Lemma~\ref{lem:X}]
The proof is inductive, following the grammar defining $\dLTL$.
The two non-trivial cases are solved by noting that $\lX \lF \lX^n \varphi$ is equivalent to $\lF \lX^{n+1} \varphi$.
\end{proof}

\vskip1em
\begin{lemma}\label{lem:disjunction}
	Let $\Delta_1$, $\Delta_2$ be two $\dLTL$ formulas. Then, $\Delta_1\wedge\Delta_2$ can be written as a disjunction of formulas in $\dLTL$.
\end{lemma}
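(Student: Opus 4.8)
The plan is to recast $\dLTL$ satisfaction combinatorially and then induct. A $\dLTL$ formula $\Delta$ with partial symbols $s_1,\dots,s_k$ (read left to right) and, for each $i$, a head operator $\lX^{n_i}$ or $\lF\lX^{n_i}$, is satisfied by a trace $t$ exactly when $t$ admits a \emph{witness}: a non-decreasing sequence $1 \le p_1 \le \dots \le p_k \le |t|$ with $t[p_i] \models s_i$ such that $p_1-1$ (resp.\ $p_{i+1}-p_i$) equals $n_1$ (resp.\ $n_{i+1}$) when the head is $\lX^{\cdot}$, and is $\ge n_1$ (resp.\ $\ge n_{i+1}$) when it is $\lF\lX^{\cdot}$. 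Hence $t \models \Delta_1 \wedge \Delta_2$ iff $t$ admits a witness $(p_i)_i$ of $\Delta_1$ and a witness $(q_j)_j$ of $\Delta_2$; by trichotomy of $p_1$ against $q_1$, $\Delta_1 \wedge \Delta_2$ is equivalent to the disjunction of the three ``ordered conjunctions'': there is a joint witness with (a) $p_1 < q_1$, (b) $p_1 = q_1$, or (c) $p_1 > q_1$. It therefore suffices to write each of (a), (b), (c) as a finite disjunction of $\dLTL$ formulas. Throughout I use that the semantics here only depends on the suffix, so a formula equivalence extends to all positions, and that prepending $\lX^{n}(s\wedge\cdot)$ or $\lF\lX^{n}(s\wedge\cdot)$ to a $\dLTL$ formula again yields a $\dLTL$ formula.

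I would prove this by induction on $\mu(\Delta_1)+\mu(\Delta_2)$, where $\mu(\Delta)$ is the number of partial symbols of $\Delta$ (its ``length'' in the sense of Section~\ref{sec:directed_formulas}). Write $\Delta_i = Q_i(s_i \wedge \Delta_i')$ with head $Q_i \in \{\lX^{a_i},\lF\lX^{a_i}\}$ and $\Delta_i'$ either a $\dLTL$ formula of length $\mu(\Delta_i)-1$ or absent (when $\mu(\Delta_i)=1$). For case (b), the two head constraints on the common first position $r$ of $s_1,s_2$ combine into a single head $Q$: two ``$=$''-constraints must agree (else the case is empty), an ``$=$''-constraint absorbs a compatible ``$\ge$''-constraint, and two ``$\ge$''-constraints merge by taking the larger offset; from $r$ onwards we must satisfy $\Delta_1'$ and $\Delta_2'$, i.e.\ $\Delta_1' \wedge \Delta_2'$, which by the induction hypothesis (total length has dropped by two) is a finite disjunction $\bigvee_\ell \Gamma_\ell$ of $\dLTL$ formulas. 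Thus case (b) becomes $\bigvee_\ell Q\big((s_1 \wedge s_2) \wedge \Gamma_\ell\big)$, discarding disjuncts in which $s_1\wedge s_2$ is unsatisfiable, and simplifying when one of $\Delta_1',\Delta_2'$ is absent.

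For case (a) I place $s_1$ at its position $p_1$ and demand, from $p_1$, the conjunction of $\Delta_1'$ (if present) with a $\dLTL$ formula $\Gamma_2^{(p_1)}$ expressing ``the first symbol $s_2$ of $\Delta_2$ occurs at some $q_1>p_1$ meeting $\Delta_2$'s head constraint, followed by $\Delta_2'$''; concretely $\Gamma_2^{(p_1)}$ has head $\lX^{\,1+a_2-p_1}(s_2\wedge\Delta_2')$ if $Q_2=\lX^{a_2}$ (requiring $p_1\le a_2$), and head $\lF\lX^{\max(1,\,1+a_2-p_1)}(s_2\wedge\Delta_2')$ if $Q_2=\lF\lX^{a_2}$. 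Because this offset depends on $p_1$, and because $\Delta_2$'s head may confine $q_1$ to a bounded window, $p_1$ ranges over a finite set of values --- which collapses to the single value $1+a_1$ when $Q_1=\lX^{a_1}$ --- together with at most one unbounded tail $p_1\ge 1+\max(a_1,a_2)$. For each choice of $p_1$ the requirement from $p_1$ is $\Delta_1'\wedge\Gamma_2^{(p_1)}$, of strictly smaller total length, so the induction hypothesis applies, and we prepend the appropriate head $\lX^{p_1-1}(s_1\wedge\cdot)$ (or $\lF\lX^{\max(a_1,a_2)}(s_1\wedge\cdot)$ in the tail case). Case (c) is case (a) with the roles of $\Delta_1,\Delta_2$ exchanged. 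Since we only form finite disjunctions of formulas obtained by prepending heads to $\dLTL$ formulas, $\Delta_1\wedge\Delta_2$ comes out as a finite disjunction of $\dLTL$ formulas, closing the induction.

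I expect the main obstacle to be bookkeeping rather than any conceptual difficulty: one must juggle the head types ($\lX^{\cdot}$ vs.\ $\lF\lX^{\cdot}$) against the three orderings of the first positions, and, above all, recompute offsets correctly after ``re-anchoring'' a constraint from the later position $p_1$ --- in particular a ``$\ge a_2$'' lower bound viewed from $p_1$ becomes $\lF\lX^{\max(1,\,1+a_2-p_1)}$, not simply $\lF\lX^{1}$ --- while checking in each branch that the case split on $p_1$ stays finite and that the length measure strictly decreases before every appeal to the induction hypothesis. The remaining loose ends are routine: base situations (an absent $\Delta_i'$) need no recursion and directly emit $\dLTL$ formulas of length at most $2$, conjunctions of partial symbols are again partial symbols once unsatisfiable ones are pruned, and every offset produced (a difference, a maximum, or a windowed value of the original offsets) is a non-negative integer, so all constructed formulas genuinely lie in $\dLTL$. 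One can streamline the finite splits by temporarily enriching $\dLTL$ with ``bounded'' and ``anchored'' position operators, carrying out the induction there, and expanding them into ordinary $\dLTL$ disjunctions only at the end, but this trades notation for bookkeeping rather than eliminating it.
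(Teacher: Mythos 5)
Your proof is correct, but it takes a genuinely different route from the paper's. The paper argues purely syntactically: it views a $\dLTL$ formula as built one operator at a time (single $\lX$'s, single $\lF$'s, and conjunction with a partial symbol) and performs a structural induction driven by four local rewriting identities, the crucial ones being $\lX\delta_1\wedge\lF\delta_2=(\lX\delta_1\wedge\delta_2)\vee\lX(\delta_1\wedge\lF\delta_2)$ and $\lF\delta_1\wedge\lF\delta_2=\lF(\delta_1\wedge\lF\delta_2)\vee\lF(\delta_2\wedge\lF\delta_1)$; the latter is the syntactic shadow of your ``which first position comes first'' trichotomy. You instead work semantically, characterising satisfaction by witness sequences of positions, splitting on the order of the two first witness positions, and re-anchoring $\Delta_2$'s head constraint at $p_1$ with explicit offset arithmetic ($\lX^{1+a_2-p_1}$, $\lF\lX^{\max(1,\,1+a_2-p_1)}$), using a finite window plus one unbounded tail to keep the disjunction finite, under an induction on the total number of partial symbols. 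Both arguments are sound and well-founded; I checked your offset formulas and the window/tail decomposition, and they are right. What the paper's approach buys is brevity: by peeling one $\lX$ or $\lF$ at a time it never has to recompute offsets or argue finiteness of a case split, since each identity manifestly produces two disjuncts of structurally smaller conjuncts. What your approach buys is an explicit normal form with quantitative control --- one can read off a bound on the number of disjuncts in terms of the offsets and lengths, and the construction is closer to something implementable --- at the price of exactly the bookkeeping you anticipate. The only slips are cosmetic: the emitted formulas in the ``absent $\Delta_i'$'' situations have length up to $1+\mu(\Delta_{3-i})$ rather than at most $2$ (only the true base case $\mu(\Delta_1)=\mu(\Delta_2)=1$ gives length at most $2$), and you should note once that $\lX$, $\lF$, and partial symbols all distribute over the disjunctions returned by the induction hypothesis, which is what lets you push the prepended head inside $\bigvee_\ell\Gamma_\ell$.
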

\vskip1em

\begin{proof}[Proof of Lemma~\ref{lem:disjunction}]
	To prove the lemma, we use an induction over the structure of $\Delta_1\land\Delta_2$ to show that it can be written as a disjunction of $\dLTL$ formulas. 
	As induction hypothesis, we consider all formulas $\Delta_1' \wedge \Delta_2'$, where at least one of $\Delta_1'$ and $\Delta_2'$ is a subformula of $\Delta_1$ and $\Delta_2$ respectively, can be written as a disjunction of $\dLTL$ formulas.
	
	The base case of the induction is when at least one of $\Delta_1$ and $\Delta_2$ is a partial symbol.
	In this case, $\Delta_1\wedge\Delta_2$ is itself a $\dLTL$ formula by definition of $\dLTL$ formulas.
	
	The induction step proceeds via case analysis on the possible root operators of the formulas $\Delta_1$ and $\Delta_2$
	\begin{itemize}
		
		\item Case: at least one of $\Delta_1$ and $\Delta_2$ is of the form $s \wedge \Delta$ for some partial symbol $s$.
		Without loss of generality, let us say $\Delta_1 = s \wedge \Delta$.
		In this case, $\Delta_1\wedge\Delta_2 = (s\wedge\Delta)\wedge\Delta_2 = s\wedge(\Delta\wedge\Delta_2)$. 
		By hypothesis, $\Delta\wedge\Delta_2 = \bigvee_i\Gamma_i$ for some $\Gamma_i$ in $\dLTL$. 
		Thus, $\Delta_1\wedge\Delta_2 = s\wedge \bigvee_i\Gamma_i = \bigvee_i(s\wedge\Gamma_i)$, which is a disjunction of $\dLTL$ formulas. 
		
		\item Case: $\Delta_1$ is of the form $\lX\delta_1$ and $\Delta_2$ is of the form $\lX\delta_2$ where $\delta_1$ and $\delta_2$ are $\dLTL$ formulas. 
		In this case, $\Delta_1\wedge\Delta_2 = \lX(\delta_1\wedge\delta_2)$. 
		By hypothesis, $\delta_1\wedge\delta_2 = \bigvee_i\gamma_i$ for some $\gamma_i$'s in $\dLTL$. 
		Thus, $\Delta_1\wedge\Delta_2 = \lX(\bigvee_i\gamma_i) = \bigvee_i\lX\gamma_i$, which is a disjunction of $\dLTL$ formulas. 
		
		\item Case: $\Delta_1$ is of the form $\lX\delta_1$ and $\Delta_2$ is of the form $\lF\delta_2$ where $\delta_1$ and $\delta_2$ are $\dLTL$ formulas.
		In this case, $\Delta_1\wedge\Delta_2 = \lX\delta_1\wedge\lF\delta_2 = (\lX\delta1 \wedge \delta_2) \vee (\lX\delta_1 \wedge \lF\lX\delta_2) = (\lX\delta1 \wedge \delta_2) \vee \lX(\delta_1 \wedge \lF\delta_2)$.
		By hypothesis, both formulas $(\lX\delta1 \wedge \delta_2)$ and $(\delta_1 \wedge \lF\delta_2)$ can be written as a disjunction of $\dLTL$ formulas.
		Thus, $\Delta_1\wedge\Delta_2$ can also be written as a disjunction of $\dLTL$ formulas
		
		\item Case: $\Delta_1$ is of the form $\lF\delta_1$ and $\Delta_2$ is of the form $\lF\delta_2$ where $\delta_1$ and $\delta_2$ are $\dLTL$ formulas.
		In this case, $\Delta_1\wedge\Delta_2 = \lF\delta_1 \wedge \lF\delta_2 = \lF(\delta_1\wedge \lF\delta_2)\vee\lF(\delta_2\wedge \lF\delta_1)$.
		By hypothesis, both formulas $\delta1 \wedge \lF\delta_2$ and $\delta_2 \wedge \lF\delta_1$ can be written as a disjunction of $\dLTL$ formulas.
		Thus, $\Delta_1\wedge\Delta_2$ can also be written as a disjunction of $\dLTL$ formulas.
	\end{itemize} 
\end{proof}

\begin{proof}[Proof of Theorem~\ref{thm:fragment-equivalence}]
	The proof proceeds via induction on the structure of formulas $\varphi$ in $\LTLf\,(\lF,\lX,\wedge,\vee)$.
	As induction hypothesis, we consider that all formulas $\varphi'$ which are structurally smaller than $\varphi$ can be expressed in $\dLTL_{\wedge,\vee}$.
	
	As the base case of the induction, we observe that formulas $p$ for all $p\in\mathcal{P}$, are $\dLTL$ formulas and thus, in $\dLTL_{\wedge,\vee}$.
	
	For the induction step, we perform a case analysis based on the root operator of $\varphi$.
	\begin{itemize}
		\item Case $\varphi=\varphi_1\vee\varphi_2$ or $\varphi=\varphi_1\wedge\varphi_2$: By hypothesis, $\varphi_1$ is in $\dLTL_{\wedge,\vee}$ and $\varphi_2$ is in $\dLTL_{\wedge, \vee}$. Now, $\varphi$ is in $\dLTL_{\wedge,\vee}$ since $\dLTL_{\wedge,\vee}$ is closed under positive boolean combinations.
		
		\item Case $\varphi=\lX\varphi_1$: By hypothesis, $\varphi_1 \in \dLTL_{\wedge,\vee}$ and thus $\varphi_1= \bigvee_j (\bigwedge_i \Delta_i)$. Now, $\varphi = \lX(\bigvee_j (\bigwedge_i \Delta_i)) = \bigvee_j (\lX(\bigwedge_i\Delta_i)) = \bigvee_j (\bigwedge_i\lX\Delta_i) = \bigvee_i\bigwedge_i \Delta'_i$ ($\lX\Delta_i$ is a $\dLTL$ formula). Thus, $\varphi$ is in $\dLTL_{\wedge,\vee}$.
		
		\item Case $\varphi=\lF\varphi_1$: By hypothesis, $\varphi_1 \in \dLTL_{\wedge,\vee}$ and thus 
		$\varphi_1 = \bigvee_j(\bigwedge_i\Delta_i)$. Now $\varphi = \lF\varphi_1 = \bigvee_j(\lF(\bigwedge_i\Delta_i))$. 
		Using lemma~\ref{lem:disjunction}, we can re-write $\bigwedge_i\Delta_i$ as $\bigvee_i\Gamma_i$ for some $\Gamma_i$'s in $\dLTL$. 
		As a result, $\varphi = \bigvee_j\bigvee_i\lF(\Gamma_i)$.
		Thus, $\varphi$ is in $\dLTL_{\wedge,\vee}$.
	\end{itemize}
\end{proof}

\section{Searching for directed formulas}\label{sec:directed_formulas}

Let us consider the following problem: given the sample $S = P \cup N$,
we want to generate all directed formulas together with the list of traces in $S$ they satisfy.
Our first technical contribution and key to the scalability of our approach is an efficient solution to this problem
based on dynamic programming.

Let us define a natural order in which we want to generate directed formulas. 
They have two parameters: \emph{length}, which is the number of partial symbols in the directed formula, and \emph{width}, which is the maximum of the widths of the partial symbols in the directed formula (recall that the width of a partial symbol is the number of atomic propositions it uses).
For instance, the directed formulas of length $1$ and width $1$ are exactly $\lX^k p$ and $\lF \lX^k p$ for $p$ an atomic proposition and $k$ bounded by the maximum length in the sample.

We consider the order based on summing these two parameters:
\[
(1,1), (2,1), (1,2), (3,1), (2,2), (1,3), \dots
\]
(We note that in practice, slightly more complicated orders on pairs are useful since we want to increase the length more often than the width.)
Our enumeration algorithm works by generating all directed formulas of a given pair of parameters in a recursive fashion. 
Assuming that we already generated all directed formulas for the pair of parameters $(\ell,w)$, we define two procedures,
one for generating the directed formulas for the parameters $(\ell+1, w)$, and the other one for $(\ell, w+1)$.

When we generate the directed formulas, we also keep track of which traces in the sample they satisfy by exploiting a dynamic programming table called 
$\LastUsedPos$.
We define it is as follows, where $\varphi$ is a directed formula
and $t$ a trace in $S$:
\[
\LastUsedPos(\varphi, t) = 
\set{i \in [1,\len(t)] : t[1,i] \models \varphi}.
\]
The main benefit of $\LastUsedPos$ is that it meshes well with directed formulas: it is algorithmically easy to compute them recursively on the structure of directed formulas.

A useful idea is to change the representation of the set of traces $S$,
by precomputing the lookup table $\Index$ defined as follows,
where $t$ is a trace in $S$, $s$ a partial symbol, and $i$ in $[1,\len(t)]$:
\[
\Index(t, s, i) = 
\set{j \in [i+1,\len(t)] : t[j] \models s}.
\]
The table $\Index$ can be precomputed in linear time from $S$,
and makes the dynamic programming algorithm easier to formulate.

Having defined the important ingredients, we now present the pseudocode~\ref{alg:search_directed} for both increasing the length and width of a formula. 

For the length increase algorithm, we define two extension operators $\cland{=k}$ and $\cland{\geq k}$ that ``extend'' the length of a directed formula $\varphi$ by including a partial symbol $s$ in the formula. 
Formally: let $s'$ denote the rightmost partial symbol in $\phi$, to obtain $s \cland{=k} \varphi$ we replace $s'$ with $(s' \land \lX^k s)$. 
Similarly, $s \cland{\geq k} \varphi$ replaces $s'$ with $(s' \land \lF \lX^k s)$. 
For instance, $c\cland{=2}\lX(a\land \lX b)= \lX(a\land \lX (b\wedge \lX^2c))$.
A naive implementation of the length increase algorithm would be to construct $s \cland{=k} \varphi$ and $s \cland{\geq k} \varphi$ for all formulas $\varphi$ and partial symbol $s$. Our algorithm relies on the data structure $\LastUsedPos$ and $\Index$ to construct only a subset of these, removing the formulas that do not yield new useful semantics.
More precisely, we only generate $s \cland{=k} \varphi$ for $k = j-i$ with $i \in \LastUsedPos(\varphi,t)$ and $j \in \Index(t,s,i)$,
and $s \cland{\geq k} \varphi$ for $k = j'-i$ with $i \in \LastUsedPos(\varphi,t)$ and $j' \leq \max \Index(t,s,i)$.
At the same time as generating, we compute $\LastUsedPos$ for the new formulas.

For the width increase algorithm, we say that two directed formulas are \emph{compatible} if they are equal except for partial symbols. 
%For two compatible formulas, we define a \emph{pointwise-and} ($\pland$) operator that applies the $\land$-operator on the corresponding partial symbols at the same positions. For instance, $\lX(a \land \lX b) \pland \lX(b \land \lX c) = \lX((a \land b) \land \lX(b \land c))$. 
For two compatible formulas, we define a \emph{pointwise-and} ($\pland$) operator that takes the conjunction of the corresponding partial symbols at the same positions. For instance, $\lX(a \land \lX b) \pland \lX(b \land \lX c) = \lX((a \land b) \land \lX(b \land c))$.

The actual implementation of the algorithm refines the algorithms in certain places. For instance:
\begin{itemize}
	\item Line~\ref{line:partial-symbol}: instead of considering all partial symbols, we restrict to those appearing in at least one positive trace.
	\item Line~\ref{line:f-optimization}: some computations for $\varphi_{\ge j}$ can be made redundant; a finer data structure factorises the computations.
	\item Line~\ref{line:compatible}: using a refined data structure, we only enumerate compatible directed formulas.
\end{itemize}

\begin{lemma}
	The length increase algorithm generates all directed formulas of parameters $(\ell + 1,w)$,
	and the width increase algorithm generates all directed formulas of parameters $(\ell,w + 1)$,
	and both correctly compute the tables \LastUsedPos. 
	Consequently, Algorithm~\ref{alg:search_directed} generates all directed formulas.
\end{lemma}

\paragraph*{The dual point of view.} We use the same algorithm to produce formulas in a dual fragment to Directed $\LTL$, which uses the $\lX$ and $\lG$ operators, the $\last$ predicate, as well as disjunctions and atomic propositions.
The only difference is that we swap positive and negative traces in the sample.
We obtain a directed formula from such a sample and apply its negation as shown below:
\[
\neg \lX \varphi = \last \vee \lX \neg \varphi \qquad ; \qquad
\neg \lF \varphi = \lG \neg \varphi \qquad ; \qquad
\neg (\varphi_1 \wedge \varphi_2) = \neg \varphi_1 \vee \neg \varphi_2.
\]
\begin{algorithm}[!t]
	\caption{Generation of directed formulas for the set of traces $S$}
	\label{alg:search_directed}
	\begin{algorithmic}[1]
		\Procedure{Search directed formulas -- length increase}{$\ell,w$}
		\ForAll{directed formulas $\varphi$ of length $\ell$ and width $w$} 
		\ForAll{partial symbols $s$ of width at most $w$}\label{line:partial-symbol}
		\ForAll{$t \in S$}
		\State $I = \LastUsedPos(\varphi, t)$
		\ForAll{$i \in I$}
		\State $J = \Index(t, s, i)$
		\ForAll{$j \in J$}
		\State $\varphi_{= j} \leftarrow s \cland{= (j-i)} \varphi$
		\State add $j$ to $\LastUsedPos(\varphi_{= j}, t)$
		\EndFor
		\ForAll{$j' \le \max(J)$}
		\State $\varphi_{\ge j'} \leftarrow s \cland{\geq (j'-i)} \varphi$\label{line:f-optimization}
		\State add $J \cap [j',\len(t)]$ to $\LastUsedPos(\varphi_{\ge j'}, t)$
		\EndFor
		\EndFor
		\EndFor
		\EndFor
		\EndFor
		\EndProcedure
		\State
		\Procedure{Search directed formulas -- width increase}{$\ell,w$}
		\ForAll{directed formulas $\varphi$ of length $\ell$ and width $w$} 
		\ForAll{directed formulas $\varphi'$ of length $\ell$ and width $1$} 
		\If{$\varphi$ and $\varphi'$ are compatible}\label{line:compatible}
		\State $\varphi'' \leftarrow \varphi \pland \varphi'$
		\ForAll{$t \in S$}
		\State $\LastUsedPos(\varphi'', t) \leftarrow \LastUsedPos(\varphi, t) \cap \LastUsedPos(\varphi', t)$
		\EndFor
		\EndIf
		\EndFor
		\EndFor
		\EndProcedure
	\end{algorithmic}
\end{algorithm}

\section{Boolean combinations of formulas}\label{sec:combining}

As explained in the previous section, we can efficiently generate directed formulas and dual directed formulas.
Now we explain how to form a Boolean combination of these formulas in order to construct separating formulas, as illustrated in the introduction.

\begin{figure}[H]
	\centering
	\includegraphics[width=0.5\columnwidth]{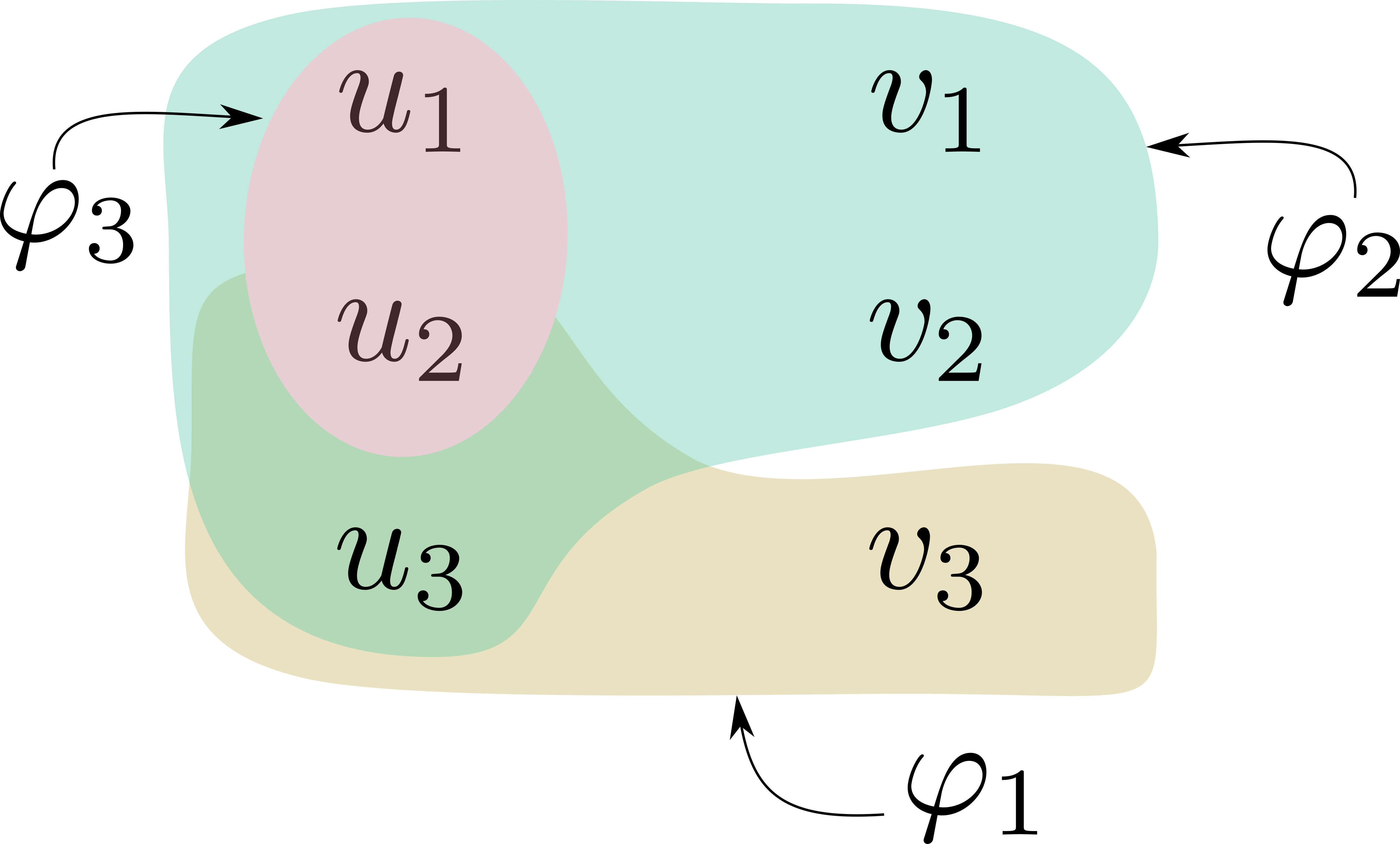}
	\caption{The Boolean set cover problem: the formulas $\varphi_1, \varphi_2$, and $\varphi_3$ satisfy the words encircled in the corresponding area; in this instance $(\varphi_1 \wedge \varphi_2) \vee \varphi_3$ is a separating formula.}
	\label{fig:boolean_set_cover}
\end{figure}

\paragraph*{Boolean combination of formulas.} Let us consider the following subproblem: given a set of formulas,
does there exist a Boolean combination of some of the formulas that is a separating formula?
We call this the \textbf{Boolean set cover} problem, which is illustrated in Figure~\ref{fig:boolean_set_cover}.
In this example we have three formulas $\varphi_1,\varphi_2$, and $\varphi_3$, each satisfying subsets of $u_1,u_2,u_3,v_1,v_2,v_3$ as represented in the drawing. Inspecting the three subsets reveals that $(\varphi_1 \wedge \varphi_2) \vee \varphi_3$ is a separating formula. 

The Boolean set cover problem is a generalization of the well known and extensively studied set cover problem,
where we are given $S_1,\dots,S_m$ subsets of $[1,n]$, and the goal is to find a subset $I$ of $[1,m]$ such that
$\bigcup_{i \in I} S_i$ covers all of $[1,n]$ -- such a set $I$ is called a cover.
Indeed, it corresponds to the case where all formulas satisfy none of the negative traces: in that case, conjunctions are not useful, and we can ignore the negative traces.
The set cover problem is known to be NP-complete. However, there exists a polynomial-time $\log(n)$-approximation algorithm called the greedy algorithm: it is guaranteed to construct a cover that is at most $\log(n)$ times larger than the minimum cover.
This approximation ratio is optimal in the following sense~\cite{DinurS14}: there is no polynomial time $(1 - o(1)) \log(n)$-approximation algorithm for set cover unless P = NP.
Informally, the greedy algorithm for the set cover problem does the following: 
it iteratively constructs a cover $I$ by sequentially adding the most `promising subset' to $I$,
which is the subset $S_i$ maximising how many more elements of $[1,n]$ are covered by adding $i$ to $I$.

\vskip1em
We introduce an extension of the greedy algorithm to the Boolean set cover problem.
The first ingredient is a scoring function, which takes into account both how close the formula is to being separating,
and how large it is. We use the following score:
\[
\score(\varphi) = \frac{\Card(\set{t\in P : t \models \varphi}) + \Card(\set{t\in N : t \not\models \varphi})}{\sqrt{|\varphi|} + 1},
\]
where $|\varphi|$ is the size of $\varphi$. 
The use of $\sqrt{\cdot}$ is empirical, it is used to mitigate the importance of size over being separating.

The pseudocode is given in Algorithm~\ref{alg:greedy}. The algorithm maintains a set of formulas $B$, which is initially the set of formulas given as input, 
and add new formulas to $B$ until finding a separating formula.
Let us fix a constant $K$, which in the implementation is set to $5$.
At each point in time, the algorithm chooses the $K$ formulas $\varphi_1,\dots,\varphi_K$ with the highest score in $B$ and constructs all disjunctions and conjunctions of $\varphi_i$ with formulas in $B$.
For each $i$, we keep the disjunction or conjunction with a maximal score, and add this formula to $B$ if it has higher score than $\varphi_i$.
We repeat this procedure until we find a separating formula or no formula is added to $B$. An important optimisation is to keep an upper bound on the size of a separating formula, which we use to cut off computations that cannot lead to smaller formulas in the greedy algorithm for the Boolean set cover problem.

\begin{algorithm}
	\caption{Greedy algorithm for the Boolean set cover problem}
	\label{alg:greedy}
	\begin{algorithmic}[1]
		\State \textbf{input:} $u_1,\dots,u_n,v_1,\dots,v_m$, and $B$ a set of formulas
		\State $K \leftarrow 5$
		
		\Procedure{Greedy}{$B$}
		\State choose the $K$ formulas $\varphi_1,\dots,\varphi_K$ in $B$ with highest score
		\ForAll{$\psi \in 
		B$}
		\ForAll{$i \in [1,K]$}
		\State construct $\psi \wedge \varphi_i$ and $\psi \vee \varphi_i$
		\State compute their scores
		\If{one of the two formulas is separating}
		\State \textbf{return} the separating formula
		\EndIf
		\EndFor
		\State let $\theta$ be the formula with highest score computed using $\psi$
		\If{$\theta$ has higher score than $\psi$}
		\State add $\theta$ to $B$
		\EndIf
		\EndFor
		\EndProcedure
	\end{algorithmic}
\end{algorithm}

\vskip1em
Another natural approach to the Boolean set cover problem is to use decision trees. 
The encoding is very natural: we use one variable for each trace and one atomic proposition for each formula to denote whether the trace satisfies the formula. We then construct a decision tree classifying all traces.

We experimented with both approaches and found that the greedy algorithm with a small enough choice for $K$ as a heuristic is both faster and yields smaller formulas. On the other hand, the formulas output using the decision tree approach are prohibitively larger and therefore not useful for explanations. Let us however remark that using decision trees we get a theoretical guarantee that the algorithm will always terminate with a separating formula (often consuming prohibitively large runtime), which might not always be the case for the greedy algorithm with a small value of $K$.

\section{The algorithm}\label{sec:algorithm}

\begin{algorithm} 
	\caption{Overview of our algorithm}
	\label{alg:main}
	\begin{algorithmic}[1]
		
		\State $B \gets \emptyset$
		\State $\psi \gets \text{none}$: best formula found
		\ForAll{$s$ in ``size order''}
		
		\State $D \gets$ all Directed $\LTLf$  formulas of parameter $s$ \label{line:dltl}
		\ForAll{$\varphi \in D$}
		\If{$\varphi$ is separating and smaller than $\psi$}\label{line:separating1}
		\State $\psi \leftarrow \varphi$\label{line:anytime1}
		\EndIf
		\EndFor
		\State $B \gets B \cup D$
		\State $B \gets$ Boolean combinations of the promising formulas in $B$\label{line:bool-comb}
		\ForAll{$\varphi \in B$}
		\If{$\varphi$ is separating and $|\varphi|<|\psi|$}\label{line:separating2}
		\State $\psi \leftarrow \varphi$\label{line:anytime2}
		\EndIf
		\EndFor
		\EndFor
		\State Return $\psi$
	\end{algorithmic}
\end{algorithm}

During the search for formulas, our algorithm searches for smaller separating formulas at each iteration than the previously found ones, if any.
Once a separating formula is found, we only search through formulas that are smaller than the found separating formula, which reduces the search space significantly.

\paragraph*{Anytime property.} The anytime property of our algorithm is also consequence of storing the smallest formula seen so far (Line~\ref{line:anytime1} and~\ref{line:anytime2}).
Once we find a separating formula, we can output it and continue the search for smaller separating formulas.

\paragraph*{Extension to the noisy setting.} The algorithm is seamlessly extended to the noisy setting by rewriting lines~\ref{line:separating1} and~\ref{line:separating2}:
instead of outputting only separating formulas, we output $\varepsilon$-separating formulas.

\vskip1em
Recall that in practice the Greedy algorithm for Boolean set cover fixes a bound $K$ and considers only the $K$ most promising formulas. On the theoretical front, little can be said about this algorithm for a small bound $K$. However, if we remove this heuristic, we obtain the following reassuring properties.

\begin{theorem}
The theoretical algorithm where $K$ is set to infinity satisfies the following properties: 
\begin{itemize}
	\item \textit{terminating}: given a bound on the size of formulas, the algorithm eventually generates all formulas of bounded size,
	\item \textit{correctness}: if the algorithm outputs a formula, then it is separating,
	\item \textit{completeness}: if there exists a separating formula in $\LTLf$ with no nesting of $\lF$ and $\lG$,
	then the algorithm finds a separating formula.
\end{itemize}
\end{theorem}

\section{Experimental evaluation}\label{sec:experiments}

\colorlet{flie}{red!50!white}
\colorlet{sys}{blue!50!white}
\colorlet{tool}{green!50!white}

In this section, we answer the following research questions to assess the performance of our $\LTLf$ learning algorithm.
\begin{description}
	\item[RQ1:] How effective are we in learning concise $\LTLf$ formulas from samples?
	\item[RQ2:] How much scalability do we achieve through our algorithm?
	\item[RQ3:] What do we gain from the anytime property of our algorithm?
	\item [RQ4:] How effective are we in the noisy learning setting?
\end{description}

\paragraph*{Experimental Setup.}
To answer the questions above, we have implemented a prototype of our algorithm in Python\,3 in a tool named \tool{}\footnote{\url{https://github.com/rajarshi008/Scarlet}} (SCalable Anytime algoRithm for LEarning lTl)~\cite{RahaRFNjoss24}.
We run \tool{} on several benchmarks generated synthetically from $\LTLf$ formulas used in practice.
To answer each research question precisely, we choose different sets of $\LTLf$ formulas. 
We discuss them in detail in the corresponding sections.
Note that, however, we did not consider any formulas with $\lU$-operator, since \tool{} is not designed to find such formulas.

To assess the performance of \tool{}, we compare it against two state-of-the-art tools for learning logic formulas from examples:
\begin{enumerate}
	\item \flie\footnote{\url{https://github.com/ivan-gavran/samples2LTL}}, developed by \cite{NeiderGavran18}, infers minimal $\LTLf$ formulas using a learning algorithm that is based on constraint solving (SAT solving).
	\item \syslite{}\footnote{\url{https://github.com/CLC-UIowa/SySLite}}, developed by \cite{ArifLERCT20}, originally infers minimal past-time $\LTLf$ formulas using an enumerative algorithm implemented in a tool called CVC4SY~\cite{ReynoldsBNBT19}.
	For our comparisons, we use a version of \syslite{} that we modified (which we refer to as \sys{}) to infer $\LTLf$ formulas rather than past-time $\LTLf$ formulas.
	Our modifications include changes to the syntactic constraints generated by \sys{} as well as changing the semantics from past-time $\LTLf$ to ordinary $\LTL$.
\end{enumerate}
To obtain a fair comparison against \tool{}, in both the tools, we disabled the $\lU$-operator.
This is because if we allow $\lU$-operator this will only make the tools slower since they will have to search through all formulas containing $\lU$.

All the experiments are conducted on a single core of a Debian machine with Intel Xeon E7-8857 CPU (at 3\,GHz) using up to 6\,GB of RAM. We set the timeout to be 900\,s for all experiments. We include scripts to reproduce all experimental results in a publicly available artifact~\cite{artifact}.

\vskip 1cm
\begin{table}[th]
	\vspace{-0.5cm}
	\centering
	\caption{Common \LTL{} formulas used in practice}
	\label{tab:LTL-patterns}
	%\resizebox{\linewidth}{!}{%
		\begin{tabular}{|rl|}
			\hline
			\rule{0pt}{1.2\normalbaselineskip} Absence: & $\lG( \lnot p)$,~$\lG( q \limplies \lG( \lnot p ) )$ \\[2mm]
			Existence: & $\lF( p )$,~$\lG( \lnot p) \lor \lF( p \land \lF( q ) )$ \\[2mm]
			Universality: & $\lG( p )$,~$\lG( q \limplies \lG( p ) )$ \\[2mm]
			\multirow{3}{*}{$\substack{$\text{Disjunction of}$\\ $\text{patterns:}$}$} & $\lG(\lnot p) \lor \lF(p \land \lF(q)$\\ 
			& \hspace{7mm}$\lor \lG(\lnot s) \lor \lF(r \land \lF(s))$,\\
			& $\lF(r) \lor \lF(p) \lor \lF(q)$\\
			\hline
		\end{tabular}
		\vspace{.5cm}
		
	\end{table}

	\paragraph*{Sample generation.} To provide a comparison among the learning tools, we follow the literature~\cite{NeiderGavran18,RoyFismanNeider20} and use synthetic benchmarks generated from real-world $\LTLf$ formulas.
	For benchmark generation, earlier works rely on a fairly naive generation method.
	In this method, starting from a formula $\varphi$, a sample is generated by randomly drawing traces and categorizing them into positive and negative examples depending on the satisfaction with respect to $\varphi$.
	This method, however, often results in samples that can be separated by formulas much smaller than $\varphi$.
	Moreover, it often requires a prohibitively large amount of time to generate samples (for instance, for $\lG p$, where almost all traces satisfy a formula) and, hence, often does not terminate in a reasonable time.
	
	To alleviate the issues in the existing method, we have designed a novel generation method for the quick generation of large samples.
	The outline of the generation algorithm is presented in Algorithm~\ref{alg:sample-generation}.
	The crux of the algorithm is to convert the $\LTLf$ formula $\varphi$ into its equivalent DFA $\mathcal{A}_\varphi$ and then extract random traces from the DFA to obtain a sample of desired length and size. 
	
	To convert $\varphi$ into its equivalent DFA $\mathcal{A}_{\varphi}$ (Line~\ref{line:ltl2dfa}), we rely on a python tool LTL\textsubscript{f}2DFA\footnote{https://github.com/whitemech/ltlf2DFA}. Essentially, this tool converts $\varphi$ into its equivalent formula in First-order Logic and then obtains a minimal DFA from the formula using a tool named MONA~\cite{KlaEtAl:Mona}.
	
	For extracting random traces from the DFA (Line~\ref{line:genrandomword1} and~\ref{line:genrandomword2}), we use a procedure suggested by \cite{Bernardi2010ALA}.
	The procedure involves generating words by choosing letters that have a higher probability of leading to an accepting state. 
	This requires assigning appropriate probabilities to the transitions of the DFA.  
	In this step, we add our modifications to the procedure. 
	The main idea is that we adjust the probabilities of the transitions appropriately to ensure that we obtain \emph{distinct} words in each iteration.

	\algdef{SE}{Loop}{EndLoop}[1]{\textbf{Loop} \(\mbox{#1}\)}{\textbf{end}}
	\renewcommand{\algorithmicrequire}{\textbf{Input:}}
	\begin{algorithm}
		\caption{Sample generation algorithm}
		\label{alg:sample-generation}
		\begin{algorithmic}[1]
			\Require Formula $\varphi$, length $\ell$, number of positive traces $n_P$, number of negative traces $n_N$.\\
			\State $P\gets \{\}$, $N\gets \{\}$ 
			\State $\mathcal{A}_\varphi$ $\gets$ convert2DFA($\varphi$)\label{line:ltl2dfa}
			\Loop { $n_P$ times}
			\State $w$ $\gets$ random accepted word of length $\ell$ from $\mathcal{A}_\varphi$.\label{line:genrandomword1}
			\State $P\gets P \cup \{w\}$
			\EndLoop
			\Loop {$n_N$ times } 
			\State $w$ $\gets$ random accepted word of length $\ell$ from $\mathcal{A}^{\textsf{c}}_\varphi$.\label{line:genrandomword2}
			\State $N \gets N \cup \{w\}$
			\EndLoop
			\State \textbf{return} $S=(P,N)$
		\end{algorithmic}
	\end{algorithm}
	
	Unlike existing sample generation methods, our method does not create random traces and try to classify them as positive or negative. This results in a much faster generation of samples.
	
	%In our method, we first convert the starting formula into an equivalent DFA and then extract accepted and rejected words to obtain a sample of the desired size.
	%We provide more details on this new generation method used in the extended version~\cite{full-version}.

	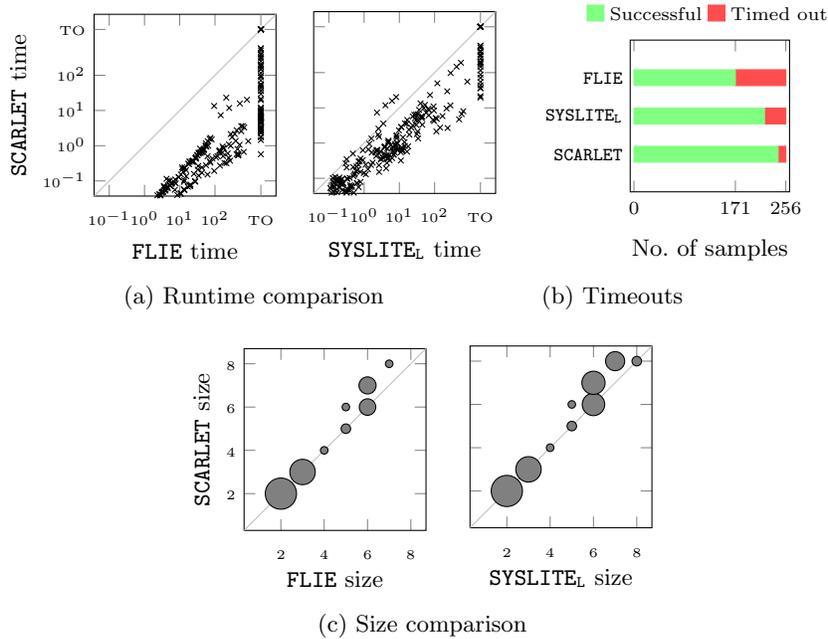
\begin{figure*}[t]
		\begin{center}
			\scalebox{1.0}{
				\hspace{-2.3cm}
				\subcaptionbox{Runtime comparison\label{subfig:runtime-comp}}{
					\begin{subfigure}[b]{0.25\textwidth}
						\centering
						\begin{tikzpicture}
							\begin{loglogaxis}[
								height=45mm,
								width=45mm,
								xmin=1e-1, ymin=1e-1,
								enlarge x limits=true, enlarge y limits=true,
								xlabel = {\flie{} time},
								ylabel = {\tool{} time},
								xtickten={-1, 0, 1, 2},%
								ytickten={-1, 0, 1, 2},%
								extra x ticks={2000}, extra x tick labels={\strut TO},%
								extra y ticks={2000}, %extra y tick labels=\empty%, 
								extra y tick labels={\strut TO},
								xminorticks=false,
								yminorticks=false,
								xlabel near ticks,
								ylabel near ticks,
								%xtick style={draw=major ticks},
								%ytick style={draw=none},
								label style={font=\footnotesize},
								x tick label style={font={\strut\tiny}},
								y tick label style={font={\strut\tiny}},		
								x label style = {yshift=1mm},
								]
								
								\addplot[
								scatter=false,
								only marks,
								mark=x,
								mark size=1.5,
								]
								table [col sep=comma,x={ Flie Time},y={Collab Time}] {data/RQ1-tool-comparison.csv};
								
								% Diagonal
								\draw[black!25] (rel axis cs:0, 0) -- (rel axis cs:1, 1);
								
							\end{loglogaxis}
						\end{tikzpicture}
					\end{subfigure}
					\hskip 6mm
					\begin{subfigure}[b]{0.25\textwidth}
						\centering
						\begin{tikzpicture}
							\begin{loglogaxis}[
								height=45mm,
								width=45mm,
								xmin=1e-1, ymin=1e-1,
								enlarge x limits=true, enlarge y limits=true,
								xlabel = {\sys{} time},
								ylabel = {},
								xtickten={-1, 0, 1, 2},%
								ytickten={-1, 0, 1, 2},
								extra x ticks={2000}, extra x tick labels={\strut TO},%
								extra y ticks={2000}, %extra y tick labels=\empty%, 
								extra y tick labels={},
								xminorticks=false,
								yminorticks=false,
								xlabel near ticks,
								ylabel near ticks,
								%xtick style={draw=major ticks},
								%ytick style={draw=none},
								label style={font=\footnotesize},
								x tick label style={font={\strut\tiny}},		
								x label style = {yshift=1mm},
								yticklabels = {},
								]
								\addplot[
								scatter=false,
								only marks,
								mark=x,
								mark size=1.5,
								]
								table [col sep=comma,x={Syslite Time},y={Collab Time}] {data/RQ1-tool-comparison.csv};
								
								% Diagonal
								\draw[black!25] (rel axis cs:0, 0) -- (rel axis cs:1, 1);
								
							\end{loglogaxis}
						\end{tikzpicture}
					\end{subfigure}
				}
				
				\subcaptionbox{Timeouts\label{subfig:timeouts}}{
					\begin{subfigure}[b]{0.17\textwidth}
						\centering
						\begin{tikzpicture}
							\scriptsize
							\begin{axis}[
								height=4cm,width=4.2cm,
								xbar stacked,
								bar width=6pt,
								enlarge x limits= {abs=0.5mm},
								enlarge y limits= {abs=5mm},
								xlabel= {Number of samples},
								xtick = {0,171,256},
								label style={font=\footnotesize},
								x tick label style={font=\strut},
								symbolic y coords = {\tool, \sys, \flie },
								legend columns = 2,
								legend style={at={(0.5,1.3)},
									anchor=north,draw=none,fill=none},
								ytick=data,
								ytick style={draw=none},
								xmin=0, xmax=256]
								\addplot+[green!50] plot coordinates {(171,\flie) (220,\sys) (243,\tool)}; 
								\addplot+[red!70] plot coordinates {(85,\flie) (36,\sys) (13,\tool)}; 
								
								\legend{\strut Successful, \strut Timed out}
							\end{axis} 
						\end{tikzpicture}
					\end{subfigure}
			}}
		\end{center}
		\centering
		\subcaptionbox{Size comparison\label{subfig:size-comp}}{
			\begin{subfigure}[b]{0.25\textwidth}
				\centering
				\begin{tikzpicture}
					\begin{axis}[
						height=45mm,
						width=45mm,
						xmin=1, xmax=8,
						ymin=1, ymax=8,
						enlarge x limits=true, enlarge y limits=true,
						xlabel = {\flie{} size},
						ylabel = {\tool{} size},
						ylabel near ticks,
						xtick={2,4,6,8},
						ytick={2,4,6,8},
						x tick label style={font={\strut\tiny}},
						y tick label style={font={\strut\tiny}},
						%extra x ticks={20}, extra x tick labels={\strut NA},
						%extra y ticks={20}, extra y tick labels={\strut NA},
						label style={font=\footnotesize},
						%x tick label style={font=\strut},
						x label style = {yshift=1mm}
						]
						\addplot[scatter=true,
						only marks,
						mark options={fill=gray},
						visualization depends on = {3*\thisrow{Log-FvC} \as \perpointmarksize},
						scatter/@pre marker code/.style={/tikz/mark size=\perpointmarksize},
						scatter/@post marker code/.style={}] table [col sep=comma,x={Flie-size},y={Collab-size-1},meta index=2] {data/RQ1-size-comparison.csv};
						every mark/.append style={solid, fill=gray},
						% Diagonal
						\draw[black!25] (rel axis cs:0, 0) -- (rel axis cs:1, 1);
					\end{axis}
				\end{tikzpicture}
			\end{subfigure}
			\hskip 2mm
			%\qquad
			\begin{subfigure}[b]{0.25\textwidth}
				\centering
				\begin{tikzpicture}
					\begin{axis}[
						height=45mm,
						width=45mm,
						xmin=1, xmax=8,
						ymin=1, ymax=8,
						enlarge x limits=true, enlarge y limits=true,
						xlabel = {\sys{} size},
						ylabel = {},
						xtick={2,4,6,8},
						ytick={2,4,6,8},
						%extra x ticks={20}, extra x tick labels={\strut NA},
						%extra y ticks={20}, extra y tick labels={\strut NA},
						label style={font=\footnotesize},
						x tick label style={font=\strut\tiny},
						x label style = {yshift=1mm},
						yticklabels = {}
						]
						\addplot[scatter=true,
						only marks,
						mark options={fill=gray},
						visualization depends on = {3*\thisrow{Log-SvC} \as \perpointmarksize},
						scatter/@pre marker code/.style={/tikz/mark size=\perpointmarksize},
						scatter/@post marker code/.style={}] table [col sep=comma,x={Sys-size},y={Collab-size-2},meta index=2] {data/RQ1-size-comparison.csv};
						every mark/.append style={solid, fill=gray},
						% Diagonal
						\draw[black!25] (rel axis cs:0, 0) -- (rel axis cs:1, 1);
					\end{axis}
				\end{tikzpicture}
			\end{subfigure}
		}
		
		\caption{Comparison of \tool, \flie{} and \sys{} on synthetic benchmarks. In Figure~\ref{subfig:runtime-comp}, all times are in seconds and `TO' indicates timeouts. The size of bubbles in the figure indicate the number of samples for each datapoint.}
		\vspace{-0.5cm}
		\label{fig:RQ1-comp}
	\end{figure*}
	
	\subsection{RQ1: Performance Comparison}
	
	To address our first research question, we have compared all three tools on a synthetic benchmark suite generated from eight $\LTLf$ formulas.
	These formulas originate from a study by Dwyer et al.~\cite{DwyerAC99}, who have collected a comprehensive set of $\LTLf$ formulas arising in real-world applications (see Table~\ref{tab:LTL-patterns} for an excerpt).
	The selected $\LTLf$ formulas have, in fact, also been used by \flie{} for generating its benchmarks.
	While \flie{} also considered formulas with $\lU$-operator, we did not consider them for generating our benchmarks due to reasons mentioned in the experimental setup.

	Our benchmark suite consists of a total of 256 samples (32 for each of the eight $\LTLf$ formulas) generated using our generation method.
	The number of traces in the samples ranges from 50 to 2\,000, while the length of traces ranges from 8 to 15.
	
	%First, we compare the running time of the tools on the generated benchmark suite. 
	Figure~\ref{subfig:runtime-comp} presents the runtime comparison of \flie{}, \sys{} and \tool{} on all 256 samples.
	From the scatter plots, we observe that \tool{} ran faster than \flie{} on all samples. Likewise,  \tool{} was faster than \sys\ on all but eight (out of 256) samples.
	%there were only 8 (out of 256) samples where \tool{} ran faster than \sys{} in .
	\tool{} timed out on only 13 samples, while \flie{} and \sys{} timed out on 85 and 36, respectively (see Figure~\ref{subfig:timeouts}).
	
	The good performance of \tool{} can be attributed to its efficient formula search technique.
	In particular, \tool{} only considers formulas that have a high potential of being a separating formula since it extracts Directed $\LTLf$  formulas from the sample itself.
	\flie{} and \sys{}, on the other hand, search through arbitrary formulas (in order of increasing size), each time checking if the current one separates the sample.
	
	Figure~\ref{subfig:size-comp} presents the comparison of the size of the formulas inferred by each tool.
	On 170 out of the 256 samples, all tools terminated and returned an $\LTLf$ formula with size at most 7.
	In 150 out of this 170 samples, \tool{}, \flie{}, and \sys{} inferred formulas of equal size, while on the remaining 20 samples \tool{} inferred formulas that were larger.
	The latter observation indicates that \tool{} misses certain small, separating formulas, in particular, the ones which are not a Boolean combination of directed formulas.
	
	However, it is important to highlight that the formulas learned by \tool{} are in most cases not significantly larger than those learned by \flie{} and \sys{}. 
	This can be seen from the fact that the average size of formulas inferred by \tool{} (on benchmarks in which none of the tools timed out) is 3.21, while the average size of formulas inferred by \flie{} and \sys{} is 3.07.
	%This asserts many commonly used $\LTLf$formulas can be represented using boolean combination of simple $\LTLf$formulas of suitable size.
	
	To ensure that \tool{} performs well, not only in our generated benchmarks, we compared the performance of the tools on an existing benchmark suite\footnote{https://github.com/cryhot/samples2LTL}~\cite{abs-2104-15083}. 
	The benchmark suite has been generated using a fairly naive generation method from the same set of $\LTLf$ formulas listed in Table~\ref{tab:LTL-patterns}.
	
	Figure~\ref{subfig:time-comp-flie-benchmarks} represents the runtime comparison of \flie{}, \sys{} and \tool{} on 98 samples. From the scatter plots, we observe that \tool{} runs much faster than \flie{} on all samples and than \sys{} on all but two samples. Also, \tool{} timed out only on 3 samples while \sys{} timed out on 6 samples and \flie{} timed out on 15 samples.  
	
	Figure~\ref{subfig:size-comp-flie-benchmarks} presents the comparison of formula size inferred by each tool. On 84 out of 98 samples, where none of the tools timed out, we observe that on 65 samples, \tool{} inferred formula size equal to the one inferred by \sys{} and \flie{}.
	Further, in the samples where \tool{} learns larger formulas than other tools, the size gap is not significant. 
	This is evident from the fact that the average formula size learned by \tool{} is 4.13 which is slightly higher than that by \flie{} and \sys{}, 3.84.
	
	\begin{figure*}[t]
		\centering
		\begin{center}
			\subcaptionbox{Runtime comparison\label{subfig:time-comp-flie-benchmarks}}{
				\begin{subfigure}[b]{0.25\textwidth}
					\centering
					\begin{tikzpicture}
						\begin{loglogaxis}[
							height=45mm,
							width=45mm,
							xmin=1e-1, ymin=1e-1,
							enlarge x limits=true, enlarge y limits=true,
							xlabel = {\flie{} time},
							ylabel = {\tool{} time},
							xtickten={-1, 0, 1, 2},%
							ytickten={-1, 0, 1, 2},%
							extra x ticks={2000}, extra x tick labels={\strut TO},%
							extra y ticks={2000}, %extra y tick labels=\empty%, 
							extra y tick labels={\strut TO},
							xminorticks=false,
							yminorticks=false,
							xlabel near ticks,
							ylabel near ticks,
							%xtick style={draw=major ticks},
							%ytick style={draw=none},
							label style={font=\footnotesize},
							x tick label style={font={\strut\tiny}},
							y tick label style={font={\strut\tiny}},		
							x label style = {yshift=1mm},
							]
							
							\addplot[
							scatter=false,
							only marks,
							mark=x,
							mark size=1.5,
							]
							table [col sep=comma,x={ Flie Time},y={Collab Time}] {data/RQ1-flie-tool-comparison.csv};
							
							% Diagonal
							\draw[black!25] (rel axis cs:0, 0) -- (rel axis cs:1, 1);
							
						\end{loglogaxis}
					\end{tikzpicture}
				\end{subfigure}
				\hskip 15mm
				\begin{subfigure}[b]{0.25\textwidth}
					\centering
					\begin{tikzpicture}
						\begin{loglogaxis}[
							height=45mm,
							width=45mm,
							xmin=1e-1, ymin=1e-1,
							enlarge x limits=true, enlarge y limits=true,
							xlabel = {\sys{} time},
							ylabel = {},
							xtickten={-1, 0, 1, 2},%
							ytickten={-1, 0, 1, 2},
							extra x ticks={2000}, extra x tick labels={\strut TO},%
							extra y ticks={2000}, %extra y tick labels=\empty%, 
							extra y tick labels={},
							xminorticks=false,
							yminorticks=false,
							xlabel near ticks,
							ylabel near ticks,
							%xtick style={draw=major ticks},
							%ytick style={draw=none},
							label style={font=\footnotesize},
							x tick label style={font={\strut\tiny}},		
							x label style = {yshift=1mm},
							yticklabels = {},
							]
							
							\addplot[
							scatter=false,
							only marks,
							mark=x,
							mark size=1.5,
							]
							table [col sep=comma,x={Syslite Time},y={Collab Time}] {data/RQ1-flie-tool-comparison.csv};
							
							% Diagonal
							\draw[black!25] (rel axis cs:0, 0) -- (rel axis cs:1, 1);
							
						\end{loglogaxis}
					\end{tikzpicture}
			\end{subfigure}}
		\end{center}
		
		\begin{center}
			\subcaptionbox{Size comparison\label{subfig:size-comp-flie-benchmarks}}{
				\begin{subfigure}[b]{0.25\textwidth}
					\centering
					\begin{tikzpicture}
						\begin{axis}[
							height=45mm,
							width=45mm,
							xmin=1, xmax=8,
							ymin=1, ymax=8,
							enlarge x limits=true, enlarge y limits=true,
							xlabel = {\flie{} size},
							ylabel = {\tool{} size},
							ylabel near ticks,
							xtick={2,4,6,8},
							ytick={2,4,6,8},
							x tick label style={font={\strut\tiny}},
							y tick label style={font={\strut\tiny}},
							%extra x ticks={20}, extra x tick labels={\strut NA},
							%extra y ticks={20}, extra y tick labels={\strut NA},
							label style={font=\footnotesize},
							%x tick label style={font=\strut},
							x label style = {yshift=1mm}
							]
							\addplot[scatter=true,
							only marks,
							mark options={fill=gray},
							visualization depends on = {2*\thisrow{Log-FvC} \as \perpointmarksize},
							scatter/@pre marker code/.style={/tikz/mark size=\perpointmarksize},
							scatter/@post marker code/.style={}] table [col sep=comma,x={Flie-size},y={Collab-size-1},meta index=2] {data/RQ1-flie-size-comparison.csv};
							every mark/.append style={solid, fill=gray},
							% Diagonal
							\draw[black!25] (rel axis cs:0, 0) -- (rel axis cs:1, 1);
						\end{axis}
					\end{tikzpicture}
				\end{subfigure}
				\hskip 8mm
				%\qquad
				\begin{subfigure}[b]{0.25\textwidth}
					\centering
					\begin{tikzpicture}
						\begin{axis}[
							height=45mm,
							width=45mm,
							xmin=1, xmax=8,
							ymin=1, ymax=8,
							enlarge x limits=true, enlarge y limits=true,
							xlabel = {\sys{} size},
							ylabel = {},
							xtick={2,4,6,8},
							ytick={2,4,6,8},
							%extra x ticks={20}, extra x tick labels={\strut NA},
							%extra y ticks={20}, extra y tick labels={\strut NA},
							label style={font=\footnotesize},
							x tick label style={font=\strut\tiny},
							x label style = {yshift=1mm},
							yticklabels = {}
							]
							\addplot[scatter=true,
							only marks,
							mark options={fill=gray},
							visualization depends on = {2*\thisrow{Log-SvC} \as \perpointmarksize},
							scatter/@pre marker code/.style={/tikz/mark size=\perpointmarksize},
							scatter/@post marker code/.style={}] table [col sep=comma,x={Syslite-size},y={Collab-size-2},meta index=2] {data/RQ1-flie-size-comparison.csv};
							every mark/.append style={solid, fill=gray},
							% Diagonal
							\draw[black!25] (rel axis cs:0, 0) -- (rel axis cs:1, 1);
						\end{axis}
					\end{tikzpicture}
			\end{subfigure}}
		\end{center}
		\caption{Comparison of \tool, \flie{} and \sys{} on existing benchmarks. In Figure~\ref{subfig:time-comp-flie-benchmarks}, all times are in seconds and `TO' indicates timeouts. The size of bubbles indicate the number of samples for each datapoint.}
		\label{fig:size-comp-app}
	\end{figure*}

	Overall, \tool{} displayed significant speed-up over both \flie{} and \sys{} while learning a formula similar in size, answering question RQ1 in the positive.

	\begin{figure}[!htbp]
		\vspace{-0.2cm}
		\begin{center}
			\subcaptionbox{Scalability in sample size\label{subfig:sample-scale}}
			{
				\hspace{1.3cm}
				\begin{subfigure}[b]{0.4\textwidth}
					\begin{tikzpicture}
						\begin{loglogaxis}[%[log origin=infty] 
							height=40mm,
							xmin=1e2, ymin=1e0,
							enlarge x limits=true, enlarge y limits=true,
							xlabel = {Number of traces},
							ylabel = {Average Time},
							xtickten={2,3,4,5},%
							ytickten= {0, 1, 2},%
							%extra x ticks={2000}, extra x tick labels={\strut TO},%
							extra y ticks={900}, %extra y tick labels=\empty%, 
							extra y tick labels={\strut TO},
							xminorticks=false,
							yminorticks=false,
							xlabel near ticks,
							ylabel near ticks,
							%xtick style={draw=major ticks},
							%ytick style={draw=none},
							label style={font=\footnotesize},
							x tick label style={font={\strut\tiny}},
							y tick label style={font={\strut\tiny}},		
							x label style = {yshift=1mm},
							legend columns = 3,
							legend style={at={(1.4,1.6)},
								anchor=north,fill=none},
							title = {Formula $\varphi_\mathit{cov}$},
							title style = {font=\footnotesize, inner sep=-4pt}
							]
							\addplot[mark=square,blue] plot coordinates {
								(200,4.4)
								(1000,21)
								(6000,125.26)
								(10000,210.48)
								(50000,771.04)
								(100000,900)
							};
							\addlegendentry{\strut\tool{}}
							
							\addplot[color=red,mark=x,opacity =0.8]
							plot coordinates {
								(200,423.3)
								(1000,900)
								(6000,900)
								(10000,900)
								(50000,900)
								(100000,900)
							};
							\addlegendentry{\strut\sys{}}

							\addplot[mark=*,green,opacity =0.7] plot coordinates {
								(200,900)
								(1000,900)
								(6000,900)
								(10000,900)
								(50000,900)
								(100000,900)
							};
							\addlegendentry{\strut\flie{}}
							
						\end{loglogaxis}
					\end{tikzpicture}
				\end{subfigure}
				\hskip 1mm
				
				\begin{subfigure}[b]{0.4\textwidth}
					\begin{tikzpicture}
						\begin{loglogaxis}[%[log origin=infty] 
							height=40mm,
							xmin=1e2, ymin=1e1,
							xmax=1e4,
							enlarge x limits=true, enlarge y limits=true,
							xlabel = {Number of traces},
							xtickten={2,3,4},%
							ytickten= {1, 2},%
							%extra x ticks={2000}, extra x tick labels={\strut TO},%
							extra y ticks={900}, %extra y tick labels=\empty%, 
							extra y tick labels={\strut TO},
							xminorticks=false,
							yminorticks=false,
							xlabel near ticks,
							ylabel near ticks,
							%xtick style={draw=major ticks},
							%ytick style={draw=none},
							label style={font=\footnotesize},
							x tick label style={font={\strut\tiny}},
							y tick label style={font={\strut\tiny}},		
							x label style = {yshift=1mm},
							legend columns = 0,
							legend style={draw=none},
							title = {Formula $\varphi_\mathit{seq}$},
							title style = {font=\footnotesize, inner sep=-4pt}
							]
							\addplot[mark=square,blue] plot coordinates {
								(200,42.8)
								(500,91.9)
								(1000,177.4)
								(2000,332.2)
								(4000,673.29)
								(6000,900)
								(10000,900)
							};
							\addlegendentry{\strut\tiny\tool{}}
							
							\addplot[color=red,mark=x,opacity =0.8]
							plot coordinates {
								(200,114.3)
								(500,265)
								(1000,509)
								(2000,900)
								(4000,900)
								(6000,900)
								(10000,900)
								
							};
							\addlegendentry{\strut\tiny\sys{}}

							\addplot[mark=*,green,opacity =0.7] plot coordinates {
								(200,900)
								(500,900)
								(1000,900)
								(2000,900)
								(4000,900)
								(6000,900)
								(10000,900)
							};
							\addlegendentry{\strut\tiny\flie{}}
							
							\legend{}
						\end{loglogaxis}
					\end{tikzpicture}
			\end{subfigure}}
		\end{center}
		\begin{center}
			\subcaptionbox{Scalability in sample lengths\label{subfig:length-scale}}
			{	\hspace{-10mm}
				\begin{subfigure}[b]{0.25\textwidth}
					\centering
					\begin{tikzpicture}
						\begin{semilogyaxis}[%[log origin=infty] 
							height=40mm,
							xmin=10, ymin=1e0,
							xmax=50,
							enlarge x limits=true, enlarge y limits=true,
							xlabel = {Length of traces},
							ylabel = {Average Time},
							xtickten={2,3,4,5},%
							ytickten= {0, 1, 2},%
							%extra x ticks={2000}, extra x tick labels={\strut TO},%
							extra y ticks={900}, %extra y tick labels=\empty%, 
							extra y tick labels={\strut TO},
							xminorticks=false,
							yminorticks=false,
							xlabel near ticks,
							ylabel near ticks,
							%xtick style={draw=major ticks},
							%ytick style={draw=none},
							label style={font=\footnotesize},
							x tick label style={font={\strut\tiny}},
							y tick label style={font={\strut\tiny}},		
							x label style = {yshift=1mm},
							legend columns = 3,
							legend style={draw=none,
								anchor=north,fill=none},
							title = {Formula $\varphi_\mathit{cov}$},
							title style = {font=\footnotesize, inner sep=-4pt}
							]
							\addplot[mark=square,blue] plot coordinates {
								(10,3.2)
								(20,5.35)
								(30,7.64)
								(40,9.94)
								(50,12.04)
							};
							\addlegendentry{\strut\tool{}}
							
							\addplot[color=red,mark=x,opacity =0.8]
							plot coordinates {
								(10,428.86)
								(20,717.35)
								(30,307.48)
								(40,380.73)
								(50,468.50)
							};
							\addlegendentry{\strut\sys{}}

							\addplot[mark=*,green,opacity =0.7] plot coordinates {
								(10,900)
								(20,900)
								(30,900)
								(40,900)
								(50,900)
							};
							\addlegendentry{\strut\flie{}}
							
							\legend{}
						\end{semilogyaxis}
					\end{tikzpicture}
				\end{subfigure}
				\hskip 22mm
				
				\begin{subfigure}[b]{0.25\textwidth}
					\centering
					\begin{tikzpicture}
						\begin{semilogyaxis}[%[log origin=infty] 
							height=40mm,
							xmin=10, ymin=1e1,
							xmax=50,
							enlarge x limits=true, enlarge y limits=true,
							xlabel = {Length of traces},
							xtickten={2,3,4},%
							ytickten= {1, 2},%
							%extra x ticks={2000}, extra x tick labels={\strut TO},%
							extra y ticks={900}, %extra y tick labels=\empty%, 
							extra y tick labels={\strut TO},
							xminorticks=false,
							yminorticks=false,
							xlabel near ticks,
							ylabel near ticks,
							%xtick style={draw=major ticks},
							%ytick style={draw=none},
							label style={font=\footnotesize},
							x tick label style={font={\strut\tiny}},
							y tick label style={font={\strut\tiny}},		
							x label style = {yshift=1mm},
							legend columns = 0,
							legend style={draw=none},
							title = {Formula $\varphi_\mathit{seq}$},
							title style = {font=\footnotesize, inner sep=-4pt}
							]
							\addplot[mark=square,blue] plot coordinates {
								(10,41.39)
								(20,198.81)
								(30,504.20)
								(40,784.69)
								(50,900)
							};
							\addlegendentry{\strut\tiny\tool{}}
							
							\addplot[color=red,mark=x,opacity =0.8]
							plot coordinates {
								(10,113.87)
								(20,205.28)
								(30,303.49)
								(40,391.68)
								(50,484.82)
								
							};
							\addlegendentry{\strut\tiny\sys{}}

							\addplot[mark=*,green,opacity =0.7] plot coordinates {
								(10,900)
								(20,900)
								(30,900)
								(40,900)
								(50,900)
							};
							\addlegendentry{\strut\tiny\flie{}}
							
							\legend{}
						\end{semilogyaxis}
					\end{tikzpicture}
			\end{subfigure}}
		\end{center}
		\begin{center}
			\subcaptionbox{Scalability in formula size\label{subfig:formula-scale}}{
				\hspace{1.3cm}
				\begin{subfigure}[b]{0.4\textwidth}
					\begin{tikzpicture}
						\begin{semilogyaxis}[%[log origin=infty] 
							height=40mm,
							xmin=2, ymin=1e-1,
							enlarge x limits=true, enlarge y limits=true,
							xlabel = {Formula size parameter (n)},
							ylabel = {Average Time},
							xtick={2,3,4,5},%
							ytickten= {-1, 0, 1, 2},%
							%extra x ticks={2000}, extra x tick labels={\strut TO},%
							extra y ticks={900}, %extra y tick labels=\empty%, 
							extra y tick labels={\strut TO},
							xminorticks=false,
							yminorticks=false,
							xlabel near ticks,
							ylabel near ticks,
							%xtick style={draw=major ticks},
							%ytick style={draw=none},
							label style={font=\footnotesize},
							x tick label style={font={\strut\tiny}},
							y tick label style={font={\strut\tiny}},		
							x label style = {yshift=1mm},
							legend columns = 3,
							title = {Formula family $\varphi^n_\mathit{cov}$},
							title style = {font=\footnotesize, inner sep=-4pt}
							]
							\addplot[mark=square,blue] plot coordinates {
								(2,0.25)
								(3,3.54)
								(4,76.07)
								(5,900)
							};
							\addlegendentry{\strut\tool{}}
							
							\addplot[color=red,mark=x,opacity =0.8]
							plot coordinates {
								(2,2.57)
								(3,425.17)
								(4,900)
								(5,900)
							};
							\addlegendentry{\strut\sys{}}

							\addplot[mark=*,green,opacity =0.7] plot coordinates {
								(2,118.37)
								(3,900)
								(4,900)
								(5,900)
							};
							\addlegendentry{\strut\flie{}}
							\legend{}
							
						\end{semilogyaxis}
					\end{tikzpicture}
				\end{subfigure}
				\hskip 1mm
				\begin{subfigure}[b]{0.4\textwidth}
					\begin{tikzpicture}
						\begin{semilogyaxis}[%[log origin=infty] 
							height=40mm,
							xmin=2, ymin=1e-1,
							enlarge x limits=true, enlarge y limits=true,
							xlabel = {Formula size parameter (n)},
							xtick={2,3,4,5},%
							ytickten= {-1, 0, 1, 2},%
							%extra x ticks={2000}, extra x tick labels={\strut TO},%
							extra y ticks={900}, %extra y tick labels=\empty%, 
							extra y tick labels={\strut TO},
							xminorticks=false,
							yminorticks=false,
							xlabel near ticks,
							ylabel near ticks,
							%xtick style={draw=major ticks},
							%ytick style={draw=none},
							label style={font=\footnotesize},
							x tick label style={font={\strut\tiny}},
							y tick label style={font={\strut\tiny}},		
							x label style = {yshift=1mm},
							legend columns = 0,
							legend style={draw=none},
							title = {Formula family $\varphi^n_\mathit{seq}$},
							title style = {font=\footnotesize, inner sep=-4pt}
							]
							\addplot[mark=square,blue] plot coordinates {
								(2,0.52)
								(3,40)
								(4,900)
								(5,900)
							};
							\addlegendentry{\strut\tool{}}
							
							\addplot[color=red,mark=x,opacity =0.8]
							plot coordinates {
								(2,0.89)
								(3,114.42)
								(4,900)
								(5,900)
							};
							\addlegendentry{\strut\sys{}}

							\addplot[mark=*,green,opacity =0.7] plot coordinates {
								(2,143.46)
								(3,900)
								(4,900)
								(5,900)
							};
							\addlegendentry{\strut\tiny\flie{}}
							
							\legend{}
						\end{semilogyaxis}
					\end{tikzpicture}
					
			\end{subfigure}}
		\end{center}
		
		\vspace{-0.4cm}
		\caption{Comparison of \tool, \flie{} and \sys{} on synthetic benchmarks. In Figures~\ref{subfig:sample-scale} and~\ref{subfig:length-scale}, all times are in seconds and `TO' indicates timeouts.}
		\label{fig:scale-results}
		
		\vspace{-0.2cm}
		
	\end{figure}
	\subsection{RQ2: Scalability}
	To address the second research question, we investigate the scalability of \tool{} in two dimensions: the size of the sample and the size of the formula from which the samples are generated.
	
	\paragraph*{Scalability with respect to the size of the samples.} For demonstrating the scalability with respect to the size of the samples, we consider two formulas $\varphi_\mathit{cov}=\lF(a_1) \land \lF (a_2) \land \lF (a_3)$ and $\varphi_\mathit{seq}=\lF(a_1 \land \lF (a_2 \land \lF a_3))$, both of which appear commonly in robotic motion planning~\cite{FainekosKP05}.
	While the formula $\varphi_\mathit{cov}$ describes the property that a robot eventually visits (or covers) three regions $a_1$, $a_2$, and $a_3$ in arbitrary order, the formula $\varphi_\mathit{seq}$ describes that the robot has to visit the regions in the specific order $a_1a_2a_3$.
	
	We have generated two sets of benchmarks for both formulas for which we varied the number of traces and their length, respectively.
	More precisely, the first benchmark set contains 90 samples of an increasing number of traces (5 samples for each number), ranging from 200 to 100\,000, each consisting of traces of fixed length 10.
	On the other hand, the second benchmark set contains 90 samples of 200 traces, containing traces from length 10 to length 50.
	%As the results on both benchmark sets are similar, we here discuss the results on the first set and refer the readers to the extended version~\cite{full-version} for the second set.
	
	%From both the formulas, we generate samples: first, we vary the number of traces from range 200 to 20000, by
	%fixing the length of the traces to be 10 to show the scalability in the first dimension; then, we fix the number of traces in a sample to be %blah
	%and vary the length of the traces from range %blah to blah
	%for the scalability experiment in the second dimension. 
	
	Figure~\ref{subfig:sample-scale} shows the average runtime results of \tool{}, \flie{}, and \sys{} on the first benchmark set.
	We observe that \tool{} substantially outperformed the other two tools on all samples.
	This is because both $\varphi_\mathit{cov}$ and $\varphi_\mathit{seq}$ are of size eight and inferring formulas of such size is computationally challenging for \flie{} and \sys{}.
	In particular, \flie{} and \sys{} need to search through all formulas of size up to eight to infer the formulas, while, \tool, due to its efficient search order (using length and width of a formula), infers them faster.
	
	From Figure~\ref{subfig:sample-scale}, we further observe a significant difference between the run times of \tool{} on samples generated from formula $\varphi_\mathit{cov}$ and from formula $\varphi_\mathit{seq}$. 
	This is evident from the fact that \tool{} failed to infer formulas for samples of $\varphi_\mathit{seq}$ starting at a size of 6\,000, while it could infer formulas for samples of $\varphi_\mathit{cov}$ up to a size of 50\,000.
	Such a result is again due to the search order used by \tool{}:
	while $\varphi_\mathit{cov}$ is a Boolean combination of directed formulas of length 1 and width 1, $\varphi_\mathit{seq}$ is a directed formula of length~3 and width~1.

	Figure~\ref{subfig:length-scale} depicts the results we obtained by running all the second benchmark set with varying trace lengths.
	Some trends we observe here are similar to the ones we observe in the first benchmark set.
	For instance, \tool{} performs better on the samples from $\varphi_\mathit{cov}$ than it does on samples from $\varphi_\mathit{seq}$.
	The reason for this remains similar: it is easier to find a formula which is a boolean combination of length 1, width 1 $\dLTL$, than a simple $\LTLf$ of length 3 and width 1.
	
	Contrary to the results on the first benchmark set, we observe that the increase of runtime with the length of the sample is quadratic.
	This explains why on samples from $\varphi_\mathit{seq}$ on large lengths such as 50, \tool{} faces time-out.
	However, for samples from $\varphi_\mathit{cov}$, \tool{} displays the ability to scale way beyond length 50.

	\paragraph*{Scalability with respect to the size of the formula.} To demonstrate the scalability with respect to the size of the formula used to generate samples, we have extended $\varphi_\mathit{cov}$ and $\varphi_\mathit{seq}$ to families of formulas $(\varphi^n_\mathit{cov})_{n \in \mathbb N \setminus \{ 0 \}}$ with $\varphi^n_\mathit{cov} = \lF(a_1) \land \lF (a_2) \land \ldots \land \lF(a_n)$ and $(\varphi^n_\mathit{seq})_{n \in \mathbb N \setminus \{ 0 \}}$ with $\varphi^n_\mathit{seq} = \lF(a_1 \land \lF (a_2 \land \lF(\ldots \land \lF a_n)))$, respectively. 
	These family of formulas describe properties similar to that of $\varphi_\mathit{cov}$ and $\varphi_\mathit{seq}$, but the number of regions is parameterized by $n \in \mathbb N \setminus \{ 0 \}$.
	We consider formulas from the two families by varying $n$ from 2 to 5 to generate a benchmark suite consisting of samples (5 samples for each formula) having 200 traces of length 10. %blah%
	%\RAJ{Do we say we have 5samples for each size?}
	
	Figure~\ref{subfig:formula-scale} shows the average run time comparison of the tools for samples from increasing formula sizes.
	We observe a trend similar to Figure~\ref{subfig:sample-scale}: \tool{} performs better than the other two tools and infers formulas of family $\varphi^n_\mathit{cov}$ faster than that of $\varphi^n_\mathit{seq}$. 
	However, contrary to the near linear increase of the runtime with the number of traces, we notice an almost exponential increase of the runtime with the formula size.
	
	Overall, our experiments show better scalability with respect to sample and formula size compared against the other tools, answering RQ2 in the positive.

	\subsection{RQ3: Anytime Property}
	To answer RQ3, we list two advantages of the anytime property of our algorithm.
	We demonstrate these advantages by showing evidence from the runs of \tool{} on benchmarks used in RQ1 and RQ2.
	
	First, in the instance of a time out, our algorithm may find a ``concise'' separating formula while the other tools will not.
	In our experiments, we observed that for all benchmarks used in RQ1 and RQ2, \tool{} obtained a formula even when it timed out.
	In fact, in the samples from $\varphi^5_\mathit{cov}$ used in RQ2, \tool{} (see Figure~\ref{subfig:formula-scale}) obtained the exact original formula, that too within one second (0.7 seconds in average), although timed out later.
	The time out was because \tool{} continued to search for smaller formulas even after finding the original formula.
	
	Second, our algorithm can actually output the final formula earlier than its termination.
	This is evident from the fact that, for the 243 samples in RQ1 where \tool{} does not time out, the average time required to find the final formula is 10.8 seconds, while the average termination time is 25.17 seconds.
	Thus, there is a chance that even if one stops the algorithm earlier than its termination, one can still obtain the final formula.
	
	Our observations from the experiments clearly indicate the advantages of anytime property to obtain a concise separating formula and thus, answering RQ3 in the positive.

%% added noisy experiments
\subsection{RQ4: Noisy Setting}
To answer RQ4, we compared the noisy learning setting of $\tool$ against a state-of-the-art tool~\cite{abs-2104-15083} that uses MaxSAT based approach to infer $\LTLf$  from noisy data. For this comparison, we ran both tools on the same synthetic benchmark suite introduced to answer RQ1 consisting of 256 samples. We tested for three noise thresholds ($1\%, 5\%$, and $10\%$) and presented the runtime comparison in Figure~\ref{subfig:noisyruntime-comp}. This comparison clearly indicates the efficiency and better performance of our algorithm against the MaxSAT-based approach. Figure~\ref{subfig:noisy-cactus} represents a cactus plot that shows the cumulative number of samples from which both tools inferred $\LTLf$  formulas over time for the same three noise levels. This clearly validates the scalability of our algorithm in the noisy setting.

\begin{figure}[H]
	\centering
	\subcaptionbox*{ }{%
		\centering
		\hskip 6mm
		\begin{tikzpicture}
			\node[draw, rectangle, minimum width=2mm, minimum height=2mm, fill=black] (black) {};
			\node[right=1mm of black, anchor=west](blacked) {noise $\leq$ 1\%};
			\node[draw, rectangle, minimum width=2mm, minimum height=2mm, fill=red, right=6mm of blacked] (red) {};
			\node[right=1mm of red, anchor=west](reded) {noise $\leq$ 5\%};
			\node[draw, rectangle, minimum width=2mm, minimum height=2mm, fill=blue, right=6mm of reded] (blue) {};
			\node[right=1mm of blue, anchor=west] {noise $\leq$ 10\%};
		\end{tikzpicture}
	}
	\subcaptionbox{Runtime comparison\label{subfig:noisyruntime-comp}}{%
		\begin{tikzpicture}
			\begin{loglogaxis}[
				height=45mm,
				width=45mm,
				xmin=1e-1, ymin=1e-1,
				enlarge x limits=true, enlarge y limits=true,
				xlabel = {MaxSAT time},
				ylabel = {Scarlet time},
				xtickten={-1, 0, 1, 2},
				ytickten={-1, 0, 1, 2},
				extra x ticks={1000}, extra x tick labels={\strut TO},
				extra y ticks={1000}, extra y tick labels={\strut TO},
				xminorticks=false,
				yminorticks=false,
				xlabel near ticks,
				ylabel near ticks,
				label style={font=\footnotesize},
				x tick label style={font={\strut\tiny}},
				y tick label style={font={\strut\tiny}},
				x label style = {yshift=1mm},
				]
				\addplot[
				scatter=false,
				only marks,
				mark=x,
				mark size=1.5,
				]
				table [col sep=comma,x={max_sat Time},y={Scarlet Time}] {data/noisy-comparison-0.01.csv};
				\draw[black!25] (rel axis cs:0, 0) -- (rel axis cs:1, 1);
			\end{loglogaxis}
		\end{tikzpicture}
		\hskip 6mm
		\begin{tikzpicture}
			\begin{loglogaxis}[
				height=45mm,
				width=45mm,
				xmin=1e-1, ymin=1e-1,
				enlarge x limits=true, enlarge y limits=true,
				xlabel = {MaxSAT time},
				xtickten={-1, 0, 1, 2},
				extra x ticks={1000}, extra x tick labels={\strut TO},
				ytickten={-1, 0, 1, 2},
				extra y tick labels={},
				xminorticks=false,
				yminorticks=false,
				xlabel near ticks,
				ylabel near ticks,
				%xtick style={draw=major ticks},
				%ytick style={draw=none},
				label style={font=\footnotesize},
				legend columns = 3,
				legend style={at={(1.4,1.6)},
					anchor=north,fill=none},
				x tick label style={font={\strut\tiny}},		
				x label style = {yshift=1mm},
				yticklabels = {},
				]
				\addplot[
				scatter=false,
				only marks,
				mark=x,
				mark size=1,
				red, % Set the color for the second plot
				]
				table [col sep=comma,x={max_sat Time},y={Scarlet Time}] {data/noisy-comparison-0.05.csv};
				\draw[black!25] (rel axis cs:0, 0) -- (rel axis cs:1, 1);
				\addlegendentry{\strut\tiny\scarlet{}}
				\legend{}
			\end{loglogaxis}
		\end{tikzpicture}
		\hskip 6mm
		\begin{tikzpicture}
			\begin{loglogaxis}[
				height=45mm,
				width=45mm,
				xmin=1e-1, ymin=1e-1,
				enlarge x limits=true, enlarge y limits=true,
				xlabel = {MaxSAT time},
				xtickten={-1, 0, 1, 2},
				ytickten={-1, 0, 1, 2},
				extra y tick labels={},
				extra x ticks={1000}, extra x tick labels={\strut TO},
				xminorticks=false,
				yminorticks=false,
				xlabel near ticks,
				ylabel near ticks,
				legend columns = 3,
				legend style={at={(1.4,1.6)},
					anchor=north,fill=none},
				%xtick style={draw=major ticks},
				%ytick style={draw=none},
				label style={font=\footnotesize},
				x tick label style={font={\strut\tiny}},		
				x label style = {yshift=1mm},
				yticklabels = {},
				]
				\addplot[
				scatter=false,
				only marks,
				mark=x,
				mark size=1, blue
				]
				table [col sep=comma,x={max_sat Time},y={Scarlet Time}] {data/noisy-comparison-0.1.csv};
				\draw[black!25] (rel axis cs:0, 0) -- (rel axis cs:1, 1);
			\end{loglogaxis}
		\end{tikzpicture}
	}
	\subcaptionbox{Cactus plot\label{subfig:noisy-cactus}}{%
		\includegraphics[width=0.5\textwidth]{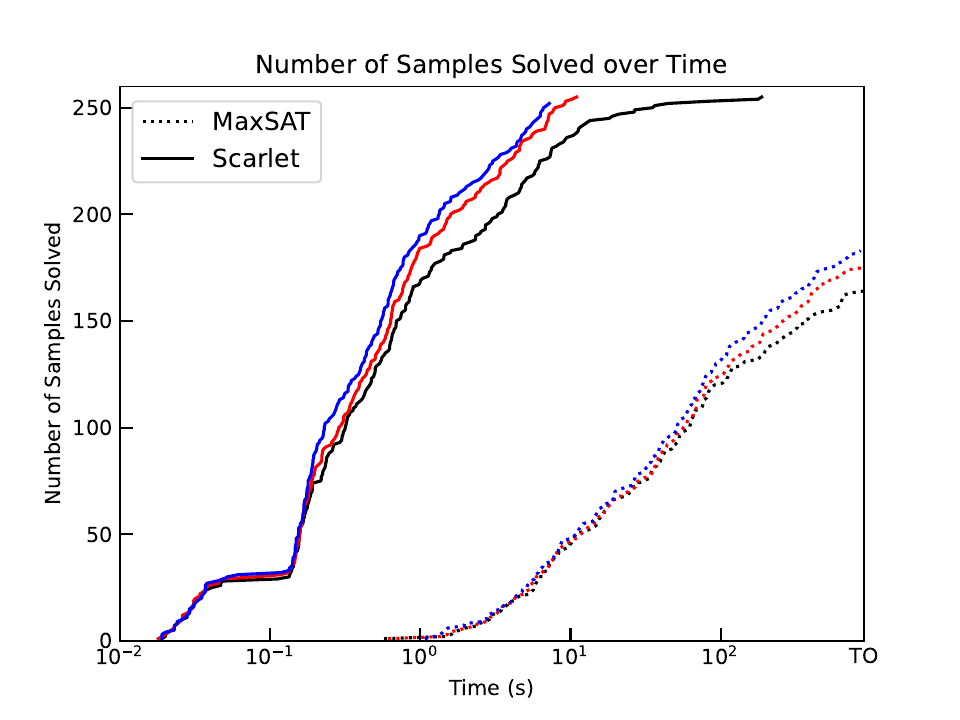}
	}
	\caption{Comparison of \tool{} and MaxSAT on synthetic noisy benchmarks for different noise thresholds. All times are in seconds, and `TO' indicates timeouts. The scatter plots in Figure~\ref{subfig:noisyruntime-comp} represent the comparison of time taken to solve each sample by both tools for three different noise levels. The cactus plot in Figure~\ref{subfig:noisy-cactus} represents the total number of samples solved within a given time-point by both the tools, and steeper lines represent better performance.}
	\label{fig:RQ4-comp}
\end{figure}
\vspace{-0.2cm}

\section{Conclusion}
We have proposed a new approach for learning temporal properties from examples, fleshing it out in an approximation anytime algorithm. We have shown in experiments that our algorithm outperforms existing tools in two ways: 
it scales to larger formulas and input samples, 
and even when it timeouts it often outputs a separating formula.

Our algorithm targets a strict fragment of $\LTLf$, restricting its expressivity in two aspects:
it does not include the $\lU$ (\textit{Until}) operator, and we cannot nest the eventually and globally operators.
We leave for future work to extend our algorithm to full $\LTL$.

An important open question concerns the theoretical guarantees offered by the greedy algorithm for the Boolean set cover problem. It extends a well known algorithm for the classic set cover problem and this restriction has been proved to yield an optimal $\log(n)$-approximation. Do we have similar guarantees in our more general setting? 

\bibliographystyle{plainnat}
\bibliography{bib}
\end{document}